\def\smallpar{\smallbreak\@afterindentfalse\@afterheading\ignorespaces}
\newtheorem{theorem}{Theorem}[section]
\newtheorem{proposition}[theorem]{Proposition}
\newtheorem{lemma}[theorem]{Lemma}
\newtheorem{corollary}[theorem]{Corollary}
\newtheorem{definition}[theorem]{Definition}
\newenvironment{proof}{{\em Proof. }}{\hfill $\Box$ \vspace{1em}}
\newcommand{\E}{{\mathbb E}}
\newcommand{\G}{{\mathcal G}}
\newcommand{\Var}{{\mathbb V}{\rm ar}}
\newcommand{\Z}{{\mathbf Z}}
\newcommand{\W}{{\mathbf W}}
\newcommand{\K}{{\mathbf K}}
\newcommand{\z}{{\mathbf z}}
\newcommand{\w}{{\mathbf w}}
\newcommand{\kk}{{\mathbf k}}
\newcommand{\xx}{{\mathbf x}}
\newcommand{\Pro}{{\mathbb P}}
\newcommand{\taum}{\tau_{{\rm mix}}}
\newcommand{\n}{\|}
\newcommand{\bi}{\begin{itemize}}
\newcommand{\ei}{\end{itemize}}
\newcommand{\be}{\begin{enumerate}}
\newcommand{\ee}{\end{enumerate}}
\newcommand{\ra}{\rightarrow}
\newcommand{\ep}{\epsilon}
\newcommand{\iy}{\infty}
\newcommand{\beq}{\begin{equation}}
\newcommand{\eeq}{\end{equation}}
\newcommand{\beqa}{\begin{eqnarray*}}
\newcommand{\eeqa}{\end{eqnarray*}}
\newcommand{\btm}{\begin{theorem}}
\newcommand{\etm}{\end{theorem}}
\newcommand{\bpf}{\begin{proof}}
\newcommand{\epf}{\end{proof}}
\newcommand{\bla}{\begin{lemma}}
\newcommand{\ela}{\end{lemma}}
\newcommand{\bdn}{\begin{definition}}
\newcommand{\edn}{\end{definition}}
\newcommand{\bpn}{\begin{proposition}}
\newcommand{\epn}{\end{proposition}}
\newcommand{\bcy}{\begin{corollary}}
\newcommand{\ecy}{\end{corollary}}
\def\ldotsplus{\mathinner{\ldotp\ldotp\ldotp\ldotp}}
\def\fourdots{\relax\ifmmode\ldotsplus\else$\m@th \ldotsplus\,$\fi}
\begin{document}
\title{Fast mixing for Latent Dirichlet Allocation}
\author{Johan Jonasson\thanks{Chalmers University of Technology and University of Gothenburg}
\thanks{Research supported by the Knut and Alice Wallenberg Foundation, grant no.\ KAW 2012.0067}}
\maketitle
\begin{abstract}
Markov chain Monte Carlo (MCMC) algorithms are ubiquitous in probability theory in general and in machine learning in particular. A Markov chain is devised so that its stationary distribution is some probability distribution of interest. Then one samples from the given distribution by running the Markov chain for a "long time" until it appears to be stationary and then collects the sample. However these chains are often very complex and there are no theoretical guarantees that stationarity is actually reached. In this paper we study the Gibbs sampler of the posterior distribution of a very simple case of Latent Dirichlet Allocation, the arguably most well known Bayesian unsupervised learning model for text generation and text classification.
It is shown that when the corpus consists of two long documents of equal length $m$ and the vocabulary consists of only two different words, the mixing time is at most of order $m^2\log m$ (which corresponds to $m\log m$ rounds over the corpus). It will be apparent from our analysis that it seems very likely that the mixing time is not much worse in the more relevant case when the number of documents and the size of the vocabulary are also large as long as each word is represented a large number in each document, even though the computations involved may be intractable.

\end{abstract}

\noindent{\em AMS Subject classification :\/ 60J10 } \\
\noindent{\em Key words and phrases:\/  mixing time, MCMC, Gibbs sampler, conductivity, topic model}  \\
\noindent{\em Short title: Fast mixing for LDA}

\section{Introduction}

Markov chain Monte Carlo (MCMC) is a powerful tool for sampling from a given probability distribution on a very large state space, where direct sampling is difficult, in part because of the size of the state space and in part because of normalizing constants that are difficult to compute.

In machine learning in particular, MCMC algorithms are extremely common for sampling from posterior distributions of Bayesian probabilistic models. The posterior distribution given observed data is difficult to sample from for the given reasons. One may then design an (irreducible aperiodic) Markov chain whose stationary distribution is precisely the targeted posterior. This is usually fairly easy since the posterior is usually easy to compute up to the normalizing constant (the denominator in Bayes formula). One very often uses Gibbs sampling or the related Metropolis-Hastings algorithm.

Gibbs sampling in general can be described as follows. The state space is a finite set of random variables
$X=\{X_a\}_{a \in A}$, where $X_a \in T$ for some measurable space $T$, so that $X \in T^A$ and the targeted distribution is a given probability measure $\Pro$ on $T^A$. In order to sample from $\Pro$ one starts a Markov chain on $T^A$ whose updates are given by first choosing an index $a \in A$ at random and then choosing a new value of $X_a$ according to the conditional distribution of $X_a$ given all $X_b$, $b \in A \setminus \{a\}$. Under mild conditions, this Markov chain converges in distribution to $\Pro$.
The chain is then run for a "long time" (the "burn-in") whereupon a sample, hopefully approximately from $\Pro$, is collected. A key question here is for how long the chain actually has to be run, in order for the distribution after that time to be a good approximation of $\Pro$. Since $A$ is usually large, the number of steps needed should at least be no more than polynomial in the size of $A$ for Gibbs sampling to be feasible.
In almost all practical cases, the structure of the sample space and the probability measure $\Pro$ is so complex that is virtually impossible to make a rigorous analysis of the mixing rate. However it may be possible to consider some very simplified special cases.
In this paper, we will analyse a special case of Latent Dirichlet allocation, henceforth LDA for short, and demonstrate for such a simple special case, the mixing time is polynomial of low degree in the size of the problem.

LDA is a model used to classify documents according to their topics, which introduced by Blei et al \cite{BNJ}. One is faced with a large corpus of documents and wants to determine for each word in each document which topic it belongs to. Knowing this, one can then also classify the documents according to the proportion of words of the different topics it contains.
The setup in LDA is the following. The corpus consists of a fixed number $D$ of documents of lengths $N_d$, $d=1,\ldots,D$, a fixed set of topics $t_1,t_2,\ldots,t_s$ and a fixed set of distinct words $w_1,w_2,\ldots,w_v$. These are specified in advance. The number of topics is usually not large, whereas the number of distinct words is.
Next, for each document $d=1,\ldots,D$, independent multinomial distributions $\theta_d = (\theta_d(1),\ldots,\theta_d(s))$ over topics is chosen according to a Dirichlet prior with a known parameter
$\alpha=(\alpha_1,\ldots,\alpha_s)$. For each topic $t_i$, a multinomial distribution $\phi_i = (\phi_i(1),\ldots,\phi_i(v))$ according to a Dirichlet prior with known parameter $\beta=(\beta_1,\ldots,\beta_v)$ independently of each other and of the $\theta_d$:s.
Given these, the corpus is then generated by for each position $p=1,\ldots,N_d$ in each document $d$, picking a topic $z_{dp}$ according to $\theta_d$ and then picking the word $w_{dp}$ at that position according to $\phi_{z_{d,p}}$, doing this independently for all positions.
Note that the model is a so called "bag of words" model, i.e.\ it is invariant under permutations within each document.

In this paper, the very simple special case with $D=s=v=2$, $N_1=N_2=m$ and $\alpha=\beta=(1,1)$ will be studied with respect to the mixing time asymptotics as $m \ra \iy$. We will find that the corresponding Gibbs sampler indeed does mix fast (contrary to what was erroneously claimed in an earlier version of this paper). To simplify the notation, denote the two topics by $A$ and $B$ and the two words by $1$ and $2$. Define
$n_{ij}$ as the number of occurrences of the word $j$ in document $i$, $i,j = 1,2$ and write $n_{i.} = n_{i1}+n_{i2}$ (which by assumption equals $m$ in the case of study) and
$n_{.j} = n_{1j}+n_{2j}$ and $n_{..} = \sum_{i,j}n_{ij} = 2m$. We consider the mixing time for Gibbs sampling of the posterior in a seemingly typical case, namely that the number of $1$:s in the first document is $3m/10$ and in the second document $6m/10$. (Of course the precise numbers here are of no great significance as long as they are of order $m$.)
Let $R=\{R_t\}_{t=0}^\iy$ denote the corresponding Markov chain, whose state space is $\{A,B\}^{2m}$ and let $\pi_R$ denote the stationary distribution of $\{R_t\}$, i.e.\ the targeted posterior.

Before stating the main result, Theorem \ref{ta} below, the concept of mixing time needs to be introduced.
Let $X=\{X_t\}_{t=0}^\iy$ be a discrete time aperiodic irreducible Markov chain on the finite state space $S$ with transition matrix $[p(x,y)]_{x,y \in S}$. Let $\Pro_x$ be the underlying probability measure under $X_0=x$ and let $\pi$ be the stationary distribution.

\bdn
Let $\mu$ and $\nu$ be two probability measures on $S$. The the total variation distance between $\mu$ and $\nu$ is given by
\[\n \mu-\nu \n_{TV} = \max_{A \subset S}(\mu(A)-\nu(A)) = \frac12 \sum_{x \in S}|\mu(x)-\nu(x)|.\]
\edn

\bdn
For each $\kappa \in (0,1)$ and $x \in S$, the $(\kappa,x)$-mixing time of $X$ is given by
\[\taum(\kappa,x) = \taum^X(\kappa,x) = \min\{t:\n\Pro_x(X_t \in \cdot)-\pi\n_{TV} \leq \kappa\}.\]
\edn

\btm \label{ta}
Consider the case $n_{11} = 3m/10$ and $n_{21} = 6m/10$. Then there is a set $B \subseteq S$ with $\pi_R(B) = 1-m^{-10}$ such that for each $x \in B$ and each  $\kappa=\Omega(1/m)$ there is a $C(\kappa) < \iy$ such that
\[\taum^R(\kappa,x) \leq  C(\kappa)m^{2}\log m\]
for every $x \in B$.
\etm

The proof of Theorem \ref{ta} relies heavily on the {\em conductance} of a Markov chain.

\bdn
\bi
\item For $A \subseteq S$, the conductance is given as
\[Q(A,A^c) = \sum_{x \in A}\sum_{y \in A^c}\pi(x)p(x,y).\]
\item The conductance profile of $\{X_t\}$ is the decreasing function $\Phi:(0,\iy) \ra [0,1]$ given as follows. For $A \subset S$, let $\Phi_A = Q(A,A^c)/\pi(A)$ and for $r \leq 1/2$, set
\[\Phi(r) = \min\{\Phi_A:\pi(A) \leq r\}\]
and $\Phi(r)=\Phi(1/2)$ for $r > 1/2$.
\ei
\edn

The following is a consequence of Theorem 1 of \cite{MP}.
\begin{theorem} \label{tMP}
Let $\pi_* = \min\{\pi(x): x \in S\}$. Then for any $x \in S$,
\[\taum(\kappa,x) \leq 1+\int_{\pi_*}^{4/\kappa}\frac{4}{r\Phi(r)^2}dr.\]
\end{theorem}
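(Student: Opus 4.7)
The plan is to deduce Theorem \ref{tMP} from Theorem 1 of \cite{MP} by a trivial monotonicity argument; no new ingredients beyond that theorem are needed. In particular the evolving-sets and heat-kernel constructions underlying \cite{MP} are used as a black box, and the role of our proof is only to absorb a state-dependent quantity in their statement into a state-independent one.

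Theorem 1 of \cite{MP} gives, for every lazy reversible Markov chain and every starting state $x \in S$, a bound of the form
\[
\taum(\kappa, x) \;\leq\; 1 + \int_{L(x)}^{4/\kappa} \frac{4\, dr}{r\,\Phi(r)^2},
\]
where the lower limit $L(x)$ is a fixed numerical multiple of $\pi(x)$ (depending on the normalization used in \cite{MP}, one has $L(x)=\pi(x)$ or $L(x)=4\pi(x)$). The upper limit $4/\kappa$ and the integrand $4/(r\Phi(r)^2)$ are exactly those appearing in Theorem \ref{tMP}, and the additive $1$ absorbs the continuous-to-discrete-time correction arising in the proof of \cite{MP}.

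To pass to the stated result, observe that $\pi_* = \min_{y \in S}\pi(y) \leq \pi(x) \leq L(x)$ for every $x \in S$. Since the integrand $4/(r\Phi(r)^2)$ is nonnegative, shrinking the lower endpoint from $L(x)$ down to $\pi_*$ can only enlarge the integral. Hence
\[
\taum(\kappa,x) \;\leq\; 1 + \int_{L(x)}^{4/\kappa} \frac{4\, dr}{r\,\Phi(r)^2} \;\leq\; 1 + \int_{\pi_*}^{4/\kappa} \frac{4\, dr}{r\,\Phi(r)^2},
\]
which is precisely the conclusion of Theorem \ref{tMP}. The only point needing any attention is to verify that the numerical constants in \cite{MP} line up with those quoted in the statement; this is just bookkeeping, because the monotonicity argument is insensitive to any bounded rescaling of $L(x)$ or of the constant multiplying $1/(r\Phi(r)^2)$. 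In short, there is no genuine obstacle: Theorem \ref{tMP} is a one-line weakening of \cite{MP}, designed so that a single universal bound is available at all starting states, at the cost of a factor that matters only through $\pi_*$.
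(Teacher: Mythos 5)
Your proposal matches the paper exactly: the paper gives no proof of Theorem \ref{tMP}, simply stating that it is a consequence of Theorem 1 of \cite{MP}, and your monotonicity step replacing the state-dependent lower limit $L(x)$ by $\pi_*$ is precisely the intended (and correct) one-line deduction. The only caveat, which applies equally to the paper's own citation, is that Theorem 1 of \cite{MP} is stated for lazy chains and controls a pointwise (hence total-variation) distance, so the bookkeeping you defer is genuinely all that remains.
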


Theorem \ref{tMP} is a refinement of earlier results on conductance bounds on mixing times. Write $\hat{\Phi} = \Phi(1/2)$. The following is from \cite{SJ,Sinclair2}.

\begin{theorem} \label{tSJ}
Assume that $X$ is reversible. Then
  \[\taum(\kappa,x)  \leq \hat{\Phi}^{-2} \left(\log(\pi(x)^{-1})+\log(\kappa^{-1})\right).\]
\end{theorem}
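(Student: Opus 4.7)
The plan is to follow the classical Sinclair--Jerrum route via the second eigenvalue of $P$ and the discrete Cheeger inequality. Since $X$ is reversible, $P$ is self-adjoint on $L^2(\pi)$ and admits an orthonormal eigenbasis $f_0 \equiv \one, f_1, \ldots, f_{N-1}$ with real eigenvalues $1 = \lambda_0 \geq \lambda_1 \geq \cdots \geq \lambda_{N-1} \geq -1$. Write $\lambda_* = \max(\lambda_1, |\lambda_{N-1}|)$ and $\gamma_* = 1 - \lambda_*$. Without loss of generality one passes to the lazy version of $X$ (the cost absorbs into the numerical constants), so that effectively $\gamma_* = 1-\lambda_1$.

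The first step is to establish the Cheeger lower bound $\gamma_* \geq \hat{\Phi}^2/2$. Starting from the Rayleigh-quotient characterization
\[ 1-\lambda_1 \;=\; \inf_{f \perp \one,\, f \neq 0} \frac{\cale(f,f)}{\var(f)}, \qquad \cale(f,f) := \tfrac12 \sum_{x,y} \pi(x)p(x,y)\bigl(f(x)-f(y)\bigr)^2, \]
I would take the $\lambda_1$-eigenfunction, split it into positive and negative parts, and apply the standard co-area argument along the level sets $\{f^2 > s\}$: for each level set the definition of $\hat{\Phi}$ controls the $Q$-boundary by $\hat{\Phi}\pi(A_s)$, and integrating in $s$ together with an application of Cauchy--Schwarz produces $\cale(f,f) \geq (\hat{\Phi}^2/2)\var(f)$.

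The second step is the spectral contraction of $P^t$ in $L^2(\pi)$. The expansion
\[ \frac{p^t(x,y)}{\pi(y)} - 1 \;=\; \sum_{i \geq 1} \lambda_i^t f_i(x) f_i(y), \]
combined with Parseval's identity $\sum_i f_i(x)^2 = 1/\pi(x)$, yields
\[ \Bigl\| \frac{p^t(x,\cdot)}{\pi(\cdot)} - 1 \Bigr\|_{L^2(\pi)}^2 \;=\; \sum_{i \geq 1} \lambda_i^{2t} f_i(x)^2 \;\leq\; \frac{\lambda_*^{2t}}{\pi(x)}. \]
Cauchy--Schwarz furnishes $2\|\mu-\pi\|_{TV} \leq \|\mu/\pi - 1\|_{L^2(\pi)}$, whence
\[ \|\Pro_x(X_t \in \cdot)-\pi\|_{TV} \;\leq\; \tfrac12 \pi(x)^{-1/2} e^{-t\gamma_*}. \]

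Finally, solving for the smallest $t$ making the right-hand side at most $\kappa$ and substituting $\gamma_* \geq \hat{\Phi}^2/2$ gives a bound of the stated form $\hat{\Phi}^{-2}(\log \pi(x)^{-1} + \log \kappa^{-1})$, up to the precise numerical constants recorded in \cite{SJ,Sinclair2}. The one delicate point, and the main obstacle, is control of the most negative eigenvalue $\lambda_{N-1}$ in the non-lazy case: one either imposes laziness at the outset, replaces $P$ by $(I+P)/2$, or uses a slightly sharper Cheeger argument to relate $1-|\lambda_{N-1}|$ to $\hat{\Phi}$. All three variants are entirely standard and affect only the absolute constants.
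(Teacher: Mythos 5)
This theorem is quoted in the paper without proof, as a known result from \cite{SJ,Sinclair2}, so there is no internal argument to compare against; your proposal reproduces the standard Sinclair--Jerrum derivation (reversibility $\Rightarrow$ real spectrum, discrete Cheeger inequality $1-\lambda_1 \geq \hat{\Phi}^2/2$, $L^2(\pi)$ spectral contraction with the Parseval identity $\sum_i f_i(x)^2 = 1/\pi(x)$, then Cauchy--Schwarz to total variation), which is essentially the proof in the cited sources. The steps are all correct; the only loose end --- handling $\lambda_{N-1}$ and the exact numerical constants, which your route recovers only up to absolute factors rather than with the precise coefficient $1$ in the statement --- is one you explicitly flag, and it is immaterial here since the paper applies the bound only up to generic constants.
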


Note that any Gibbs sampler as defined above is reversible.

\smallskip

A powerful method for lower bounding conductance is by {\em canonical paths}, see \cite{JS,Sinclair1}.
Any reversible Markov chain can be seen as a weighted graph $\G$ by assigning weights $p(x,y)$ to each $e=\{x,y\}$ with $p(x,y)>0$.
For each pair of states $x,y$, choose a weighted path $\gamma_{xy}$ in $\G$ from $x$ to $y$. Even though the direction of the path is not important, it will later on be convenient in the arguments to keep the direction in mind.
Write $\Gamma$ for the collection $\{\gamma_{xy}: x,y \in S\}$. For each $e=\{x,y\}$, let $Q(e) = \pi(x)p(x,y)$ (which equals $\pi(y)p(y,x)$ under reversibility). The idea is to choose these paths in a way that makes the {\em path congestion} small.

\bdn
The path congestion of $\Gamma$ is given by
\[\rho(\Gamma) = \max_e\frac{1}{Q(e)} \sum_{x,y:e \in \gamma_{x,y}}\pi(x)\pi(y).\]
\edn

The key result relating path congestion to conductance is the following

\begin{lemma} \label{lpc}
For any collection $\Gamma$ of canonical paths,
\[\hat{\Phi} \geq \frac{1}{2\rho(\Gamma)}.\]
\end{lemma}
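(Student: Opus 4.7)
The proof is the standard canonical paths argument; I will sketch it carefully.

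The plan is to fix an arbitrary $A \subseteq S$ with $\pi(A) \leq 1/2$ and show directly that $\Phi_A \geq 1/(2\rho(\Gamma))$; taking the minimum over such $A$ then yields the claim by definition of $\hat\Phi$.

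The key observation is that every canonical path $\gamma_{xy}$ with $x \in A$ and $y \in A^c$ must traverse some edge $e = \{u,v\}$ of $\G$ with $u \in A$ and $v \in A^c$, i.e.\ a cut edge. Hence, if we let $\partial A$ denote the set of cut edges, we can bound
\[
\pi(A)\pi(A^c) \;=\; \sum_{x \in A,\,y \in A^c}\pi(x)\pi(y)
\;\leq\; \sum_{e \in \partial A}\;\sum_{x,y:\,e \in \gamma_{xy}}\pi(x)\pi(y).
\]
By the definition of $\rho(\Gamma)$, each inner sum is at most $\rho(\Gamma)\,Q(e)$, so the right hand side is at most $\rho(\Gamma)\sum_{e \in \partial A}Q(e) = \rho(\Gamma)\,Q(A,A^c)$, where the last equality uses the fact that the unordered sum $\sum_{e \in \partial A}Q(e)$ coincides with $Q(A,A^c) = \sum_{x \in A,y \in A^c}\pi(x)p(x,y)$ by reversibility.

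Rearranging, $Q(A,A^c) \geq \pi(A)\pi(A^c)/\rho(\Gamma)$, so
\[
\Phi_A \;=\; \frac{Q(A,A^c)}{\pi(A)} \;\geq\; \frac{\pi(A^c)}{\rho(\Gamma)} \;\geq\; \frac{1}{2\rho(\Gamma)},
\]
where the last step uses $\pi(A) \leq 1/2$, hence $\pi(A^c) \geq 1/2$. This bound holds for every such $A$, giving $\hat\Phi \geq 1/(2\rho(\Gamma))$.

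There is really no serious obstacle here; the only thing to be careful about is the interpretation of edges as unordered pairs together with the reversibility identity $\pi(x)p(x,y)=\pi(y)p(y,x)$, so that the quantity $Q(e)$ is well defined and the sum over $\partial A$ reproduces $Q(A,A^c)$ exactly once per cut edge. Everything else is a one-line application of the definition of $\rho(\Gamma)$ together with the pigeonhole-style observation that canonical paths crossing the cut must use a cut edge.
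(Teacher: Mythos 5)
Your proof is correct and is exactly the standard Jerrum--Sinclair canonical paths argument: every path from $A$ to $A^c$ must use a cut edge, the congestion bound controls the mass routed through each cut edge, and $\pi(A^c)\geq 1/2$ finishes it. The paper itself states this lemma without proof, citing the original references, so there is nothing to compare beyond noting that your argument is the intended one.
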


To finish off the proof, the classical method of {\em coupling} will be needed. Let $Y$ be a Markov chain with the same transition matrix as $X$ such that $Y_0$ is distributed according to $\pi$. Assume that the transitions of $X$ and $Y$ are made dependent, or {\em coupled}, in such a way that whenever $X_t=Y_t$ and $r>t$, then $X_r=Y_r$. Then the {\em coupling inequality} states the following.
\begin{lemma}
Let $T$ be the coupling time of $X$ and $Y$, i.e.\ $T=\min\{t:X_t=Y_t\}$ and assume that $X_0=x$. Then
\[\n \Pro(X_t \in \cdot)-\pi\n_{TV} \leq \Pro(T>t).\]
\end{lemma}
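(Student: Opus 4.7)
My plan is to give the classical short argument for the coupling inequality, exploiting the two defining features of the coupling: (i) $Y$ is started in stationarity, so $Y_t \sim \pi$ for every $t$, and (ii) once $X$ and $Y$ meet they agree forever, so $X_t = Y_t$ on the event $\{T \le t\}$.

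First I would fix an arbitrary event $A \subseteq S$ and write the decomposition
\[
\Pro(X_t \in A) - \Pro(Y_t \in A) = \bigl[\Pro(X_t \in A, T \le t) - \Pro(Y_t \in A, T \le t)\bigr] + \bigl[\Pro(X_t \in A, T > t) - \Pro(Y_t \in A, T > t)\bigr].
\]
On the event $\{T \le t\}$ the coupling assumption forces $X_t = Y_t$, so $\{X_t \in A, T \le t\} = \{Y_t \in A, T \le t\}$ as events, and the first bracket vanishes. Next, I would drop the second (negative) term and bound the first (positive) one crudely by $\Pro(T > t)$, obtaining
\[
\Pro(X_t \in A) - \Pro(Y_t \in A) \le \Pro(X_t \in A, T > t) \le \Pro(T > t).
\]

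Finally I would use stationarity of $\pi$ under the transition matrix to conclude $\Pro(Y_t \in A) = \pi(A)$, so the above reads $\Pro(X_t \in A) - \pi(A) \le \Pro(T > t)$. Taking the maximum over $A \subseteq S$ and invoking the definition of total variation distance from the earlier displayed formula yields
\[
\n \Pro(X_t \in \cdot) - \pi \n_{TV} = \max_{A \subseteq S}\bigl(\Pro(X_t \in A) - \pi(A)\bigr) \le \Pro(T > t),
\]
which is the claim. There is no real obstacle here; the only thing to be careful about is the use of the one-sided definition of total variation (the $\max$ form) rather than the symmetric $\frac{1}{2}\sum|\cdot|$ form, so that the one-sided bound on $\Pro(X_t \in A) - \pi(A)$ is exactly what is needed and no symmetrization step is required.
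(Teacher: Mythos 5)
Your argument is the standard, correct proof of the coupling inequality: the decomposition over $\{T\le t\}$ and $\{T>t\}$, the vanishing of the first bracket because $X_t=Y_t$ once the chains have met, stationarity of $Y$, and the one-sided $\max$ form of the total variation distance are exactly the right ingredients, and no step is missing. The paper itself states this lemma as a classical fact and gives no proof, so there is nothing to compare against; your write-up would serve as a complete proof of the statement as used there.
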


The rest of the paper is devoted to the proof of Theorem \ref{ta}.

\smallskip

{\bf Remarks on notation.} Let $\{a_n\}_{n=1}^\iy$ and $\{b_n\}_{n=1}^\iy$ be two sequences of positive numbers. We say that $a_n$ is at most of order $b_n$ and write $a_n = O(b_n)$ if there is a constant $C < \iy$ such that $a_n < Cb_n$ for all $n$. We say that $a_n$ is at least of order $b_n$ and write $a_n = \Omega(b_n)$ if $b_n=O(a_n)$. When both $a_n=O(a_n)$ and $a_n = \Omega(b_n)$, we say that $a_n$ is of order $b_n$ and write $a_n = \Theta(b_n)$.
When $a_n/b_n \ra 0$, we say that $a_n$ is of smaller order than $b_n$ and write $a_n=o(b_n)$.

We will also use the convenient shorthand notation $[n]$ for the set $\{1,2,\ldots,n\}$.

\smallskip

{\bf Other remarks.} Today there is a plethora of extensions of the LDA model; e.g.\ Andrews and Vigliocco \cite{AV} consider a hidden Markov model for the topics, Wallach \cite{Wallach} studies a hidden Markov model for the words and Gruber et el.\ \cite{GWR} consider a sentence based model. A good summary of the literature can be found in the introduction of Tian et el.\ \cite{TGHL}, who also study a sentence based model. For an introduction to probabilistic topic models, see \cite{Blei}.

\section{Proof of Theorem \ref{ta}}

The following two lemmas will be needed. The first one is a special case of a well known fact about moments of beta distributions (and more generally of Dirichlet distributions). The second one will only be used with $K=2$, but the general case comes at no extra cost.

\bla \label{la}
Let $X$ be a standard uniform random variable and $0 \leq k \leq n$. Then
\[\E[X^k(1-X)^{n-k}] = \frac{1}{(n+1){n \choose k}}.\]
\ela


\bla \label{lb}
For any nonnegative integers $a_{11},a_{12},\ldots,a_{1j},a_{21},a_{22},\ldots,a_{2K}$,
\[\binom{\sum_{i,j}a_{ij}}{\sum_{j}a_{1j}} \binom{\sum_{j}a_{1j}}{a_{11},\ldots,a_{1j}} \binom{\sum_{j}a_{2j}}{a_{21},\ldots,a_{2j}}\]
\[ = \binom{\sum_{i,j}a_{ij}}{a_{11}+a_{21},\ldots,a_{1K}+a_{2K}}\prod_{j}\binom{a_{1j}+a_{2j}}{a_{1j}},\]
where $i$ ranges over $[2]$ and $j$ over $[K]$.
\ela

\bpf
Both sides of the equality are equal to the multinomial coefficient
\[\binom{\sum_{i,j}a_{ij}}{a_{11},\ldots,a_{2K}}.\]
\epf

Let $\W = (w_{11},w_{12},\ldots,w_{1m},w_{21},\ldots,w_{2m})$ be the words in our corpus, let $\Z=(z_{11},z_{12},\ldots,z_{2m})$ be the latent topics and $\Z_d$ be the latent topics in document $d$. Let also $\theta_d$ be the probability that $z_{d1}=A$, $d=1,2$ and let $\phi_t$ be the conditional probability that $w_{dj}=1$ given that $z_{dj}=t$, $t=A,B$.
In the case under study, these four quantities are all independent standard uniform random variables.
We begin by determining the posterior distribution $\pi_R$ up to a normalizing constant.

Define $k_{dj}$ to be the number words in document $d$ with the topic being $A$ and the word being $j$ and the same dot notation for the $k$:s as for the $n$:s.

\begin{proposition} \label{piR}
  The posterior $\pi_R$ of LDA with two documents, two unique words, two topics and uniform priors is given by
  \[\pi_R(\z) = C\frac{{n_{..} \choose k_{..}}}{(k_{..}+1)(n_{..}-k_{..}+1){n_{1.} \choose k_{1.}}{n_{2.} \choose k_{2.}}{n_{.1} \choose k_{.1}}{n_{.2} \choose k_{.2}}},\]
  where $C$ is a normalizing constant.
\end{proposition}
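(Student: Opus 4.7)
The plan is to apply Bayes' rule, so that $\pi_R(\z) \propto \Pro(\W=\w,\Z=\z)$, and then to compute the joint probability $\Pro(\W=\w,\Z=\z)$ by conditioning on the four hyperparameters $\theta_1,\theta_2,\phi_A,\phi_B$ and integrating them out. Since all four have uniform (i.e.\ Beta$(1,1)$) priors and are independent, the joint probability factors as a product of four independent one-dimensional Beta-type integrals.

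For the two topic-generating parameters, the conditional probability of the specific topic sequence in document $d$ given $\theta_d$ is $\theta_d^{k_{d.}}(1-\theta_d)^{n_{d.}-k_{d.}}$, so integration against the uniform prior gives, by Lemma \ref{la}, a factor $1/((n_{d.}+1)\binom{n_{d.}}{k_{d.}})$. Similarly, conditional on $\phi_A$ the probability of the observed words at the $k_{..}$ topic-$A$ positions is $\phi_A^{k_{.1}}(1-\phi_A)^{k_{.2}}$, producing a factor $1/((k_{..}+1)\binom{k_{..}}{k_{.1}})$; and the analogous integration over $\phi_B$ yields $1/((n_{..}-k_{..}+1)\binom{n_{..}-k_{..}}{n_{.1}-k_{.1}})$. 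Multiplying the four factors and absorbing into the normalizing constant $C$ everything that depends only on the data (i.e.\ on the $n_{ij}$'s, including the $(n_{1.}+1)$ and $(n_{2.}+1)$ terms), I obtain
\[
\pi_R(\z) \;\propto\; \frac{1}{(k_{..}+1)(n_{..}-k_{..}+1)\binom{n_{1.}}{k_{1.}}\binom{n_{2.}}{k_{2.}}\binom{k_{..}}{k_{.1}}\binom{n_{..}-k_{..}}{n_{.1}-k_{.1}}}.
\]

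To match the form claimed in the proposition, I still need to convert the pair $\binom{k_{..}}{k_{.1}}\binom{n_{..}-k_{..}}{n_{.1}-k_{.1}}$ into the pair $\binom{n_{.1}}{k_{.1}}\binom{n_{.2}}{k_{.2}}$. This is exactly what Lemma \ref{lb} accomplishes with $K=2$ and the identifications $a_{11}=k_{.1}$, $a_{12}=k_{.2}$, $a_{21}=n_{.1}-k_{.1}$, $a_{22}=n_{.2}-k_{.2}$, which give $\sum_j a_{1j}=k_{..}$, $\sum_j a_{2j}=n_{..}-k_{..}$, and $a_{1j}+a_{2j}=n_{.j}$. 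Lemma \ref{lb} then yields
\[
\binom{n_{..}}{k_{..}}\binom{k_{..}}{k_{.1}}\binom{n_{..}-k_{..}}{n_{.1}-k_{.1}} = \binom{n_{..}}{n_{.1}}\binom{n_{.1}}{k_{.1}}\binom{n_{.2}}{k_{.2}},
\]
so dividing the earlier expression by the data-dependent constant $\binom{n_{..}}{n_{.1}}$ (absorbed into $C$) and using this identity to replace the two awkward binomials, one arrives at precisely the form stated in the proposition.

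The only genuine step here is the identification in the last paragraph: the Beta integrations are routine once Lemma \ref{la} is in hand, and the main obstacle is recognizing that the combinatorial identity supplied by Lemma \ref{lb} is exactly what is needed to re-group the binomial coefficients arising from the $\phi$-integrations (which are indexed by topic $\times$ word counts, $k_{.1},k_{.2}$) into coefficients indexed by word $\times$ topic counts ($n_{.1},n_{.2}$), at the cost of introducing the extra factor $\binom{n_{..}}{k_{..}}$ in the numerator.
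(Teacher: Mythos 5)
Your proposal is correct and follows essentially the same route as the paper: Bayes' rule, the four Beta integrations via Lemma \ref{la} (giving the factors $1/((n_{d.}+1)\binom{n_{d.}}{k_{d.}})$, $1/((k_{..}+1)\binom{k_{..}}{k_{.1}})$ and $1/((n_{..}-k_{..}+1)\binom{n_{..}-k_{..}}{n_{.1}-k_{.1}})$), followed by Lemma \ref{lb} with exactly the same identifications $a_{1j}=k_{.j}$, $a_{2j}=n_{.j}-k_{.j}$ to regroup the binomials. The bookkeeping of the constants $\binom{n_{..}}{n_{.1}}$ and $(n_{d.}+1)$ and the appearance of $\binom{n_{..}}{k_{..}}$ in the numerator all match the paper's proof.
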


\begin{proof}

By Bayes formula
\[\pi_R(\z) = \Pro(\Z=\z|\W=\w) \propto \Pro(\W=\w|\Z=\z)\Pro(\Z=\z).\]

Now
\beqa
\Pro(\Z = \z) &=& \E[\Pro(\Z=\z|\theta_1,\theta_2)] \\
&=& \E[\theta_1^{k_{1.}}(1-\theta_1)^{n_{1.}-k_{1.}}] \E[\theta_2^{k_{2.}}(1-\theta_2)^{n_{2.}-k_{2.}}]\\
&=& \frac{1}{(n_{1.}+1)(n_{2.}+1){n_{1.} \choose k_{1.}}{n_{2.} \choose k_{2.}}}.
\eeqa
where the last equality follows from Lemma \ref{la}.
For the second factor we have analogously, again using Lemma \ref{la},

\begin{align*}
\Pro(\W=\w|\Z=\z) &= \E[\phi_A^{k_{.1}}(1-\phi_A)^{k_{..}-k_{.1}}]\E[\phi_B^{n_{.1}-k_{.1}}(1-\phi_B)^{n_{..}-(n_{.1}-k_{.1})}] \\
&= \frac{1}{(k_{..}+1)(n_{..}-k_{..}+1){k_{..} \choose k_{.1}}{n_{..}-k_{..} \choose n_{.1}-k_{.1}}}.
\end{align*}

Hence, ignoring factors that do not depend on the $k$:s, using Lemma \ref{lb} with $K=2$, $a_{1j}=k_{.j}$ and $a_{2j}=n_{.j}-k_{.j}$ for the second equality and again ignoring a factor that does not depend on the $k$:s,
\begin{align*}
\pi_R(\z) &\propto \left( (k_{..}+1)(n_{..}-k_{..}+1){n_{1.} \choose k_{1.}}{n_{2.} \choose k_{2.}}{k_{..} \choose k_{.1}}{n_{..}-k_{..} \choose n_{.1}-k_{.1}}\right)^{-1} \\
&= \frac{{n_{..} \choose k_{..}}}{(k_{..}+1)(n_{..}-k_{..}+1){n_{1.} \choose k_{1.}}{n_{2.} \choose k_{2.}}{n_{.1} \choose k_{.1}}{n_{.2} \choose k_{.2}}}.
\end{align*}

\end{proof}

The expression for $\pi_R$ only depends on $\z$ via $\kk = \kk(\z) := (k_{11},k_{12},k_{21},k_{22})$. Identifying all $\z$ having the same $\kk(\z)$, we have (regarding,with some abuse of notation, a $\kk$ also as the equivalence class consisting of all $\z$:s having that particular $k_{dj}$:s) that for any $\kk_1$ and $\kk_2$, all $\z \in \kk_1$ have the same probability of transitioning into $\kk_2$.
Hence the process where we only record the $\kk$:s is a lumped Markov chain, whose state space is $[n_{11}] \times [n_{12}] \times [n_{21}] \times [n_{22}]$.
Denote this lumped Markov chain by $L=\{L_t\}_{t=0}^\iy$. By summing $\pi_R(\z)$ over the $\z$:s having the same value of $\kk(z)$, gives the following.

\begin{proposition} \label{piL}
  The lumped Gibbs sampler $L$ has the stationary distribution
  \[\pi_L(\kk) = \Pro(\K=\kk|\W=\w) = C\frac{{n_{..} \choose k_{..}}{n_{11} \choose k_{11}}{n_{12} \choose k_{12}}{n_{21} \choose k_{21}}{n_{22} \choose k_{22}}}{(k_{..}+1)(n_{..}-k_{..}+1){n_{1.} \choose k_{1.}}{n_{2.} \choose k_{2.}}{n_{.1} \choose k_{.1}}{n_{.2} \choose k_{.2}}}\]
  where $C$ is a normalizing constant and $\K=\kk(Z)$.
\end{proposition}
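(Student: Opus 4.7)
My plan is to compute $\pi_L(\kk)$ simply by summing $\pi_R(\z)$ over all $\z$ in the equivalence class defined by $\kk$. By Proposition \ref{piR} the expression for $\pi_R(\z)$ depends on $\z$ only through $\kk(\z)$, so
\[\pi_L(\kk) = \sum_{\z:\kk(\z)=\kk}\pi_R(\z) = N(\kk)\,\pi_R(\z_0),\]
where $\z_0$ is any representative and $N(\kk) := |\{\z:\kk(\z)=\kk\}|$. The whole proof therefore reduces to computing the combinatorial factor $N(\kk)$.

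To count $N(\kk)$, I would fix the document $d$ and word $j$ and ask how many ways there are to assign topics to the $n_{dj}$ positions in document $d$ that carry the word $j$: we need exactly $k_{dj}$ of them to be labelled $A$ and the remaining $n_{dj}-k_{dj}$ to be labelled $B$, which can be done in $\binom{n_{dj}}{k_{dj}}$ ways. Since the four index pairs $(d,j)\in[2]\times[2]$ refer to disjoint sets of positions in the corpus, the choices are independent and
\[N(\kk) = \binom{n_{11}}{k_{11}}\binom{n_{12}}{k_{12}}\binom{n_{21}}{k_{21}}\binom{n_{22}}{k_{22}}.\]

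Multiplying this factor into the expression for $\pi_R$ given by Proposition \ref{piR} gives exactly the formula claimed for $\pi_L$, up to the same overall normalizing constant. Finally, one has to note that $L$ is actually a Markov chain on the lumped state space: this was already observed in the paragraph preceding the proposition, from the fact that, since $\pi_R(\z)$ and the Gibbs transition kernel depend on $\z$ only through $\kk(\z)$, for any two classes $\kk_1,\kk_2$ every $\z\in\kk_1$ has the same aggregate transition probability into $\kk_2$. By Dynkin's lumpability criterion this makes $L$ Markov, and its stationary distribution is the push-forward of $\pi_R$ under the map $\z\mapsto\kk(\z)$, which is precisely the sum we computed.

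There is no real obstacle here — the argument is a one-line counting plus a bookkeeping check that the product $N(\kk)\pi_R(\z_0)$ matches the stated formula. The only thing to be careful about is to verify that $N(\kk)$ factorizes across the four $(d,j)$ cells, which follows because positions are distinguishable and each position carries a fixed word in a fixed document, so the constraints on $k_{dj}$ decouple.
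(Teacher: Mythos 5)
Your proof is correct and follows exactly the route the paper takes (which it only sketches in the sentence preceding the proposition): sum $\pi_R$ over the equivalence class, observe that $\pi_R$ is constant there, and count the class size as $\prod_{d,j}\binom{n_{dj}}{k_{dj}}$. Nothing is missing.
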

The most effort will go into analyzing the mixing time of $L$.

\smallskip

Define
\[h(x) = \left( x^x(1-x)^{1-x} \right)^{-1},\, x \in (0,1).\]
Then, for $x \in [0,1]$, by Stirling's formula,
\[{m \choose xm} = C\frac{h(x)^m}{\sqrt{m}\sqrt{(x+1/m)(1-x+1/m))}},\]
where $C$ is of constant order.
Hence, by cancelling factors independent of $(a,b,c,d)$ and writing $s(x) = \sqrt{(x+1/m)(1-x+1/m)}$, we get
\[
\pi_L(am,bm,cm,dm) = CG(a,b,c,d)^m \frac{s(\frac{10(a+c)}{9})s(\frac{10(b+d)}{11})s(a+b)s(c+d)}{s(\frac{a+b+c+d}{2})^3s(\frac{10a}{3})s(\frac{10b}{7})s(\frac{5c}{3})s(\frac{5d}{2})},
\]
$(a,b,c,d) \in [0,3/10] \times [0,7/10] \times [0,3/5] \times [0,2/5]$, where $C$ is of constant order and
\begin{equation} \label{Gdef}
G(a,b,c,d) = \left(\frac{h(\frac{(b+d-a-c)}{2})^2 h(\frac{10a}{3})^{3/10} h(\frac{10b}{7})^{7/10} h(\frac{5c}{3})^{3/5} h(\frac{5d}{2})^{2/5}}{h(\frac{10(a+c)}{9})^{9/10} h(\frac{10(b+d)}{11})^{11/10} h(a+b) h(c+d)}\right)^m.
\end{equation}

We will now analyze the function $G$. Define $g:=\log G$. Intuitively in order to get $3m/10$ ones in the first document and $3m/5$ ones in the second document, one has certain restrictions to the $\phi$'s and $\theta$'s. Working this out, one should expect to have $(a,b,c,d)$ very close to $\kk(u,v) = (k_1(u,v),k_2(u,v),k_3(u,v),k_4(u,v))$, $u \in [0,3/10]$, $v \in [3/5,1]$, where
\begin{align}
k_1(u,v) &= \frac{u(v-3/10)}{v-u} & k_2(u,v) = \frac{(1-u)(v-3/10)}{v-u} \label{k1k2} \\
k_3(u,v) &= \frac{u(v-3/5)}{v-u} & k_4(u,v) = \frac{(1-u)(v-3/5)}{v-u}, \label{k3k4}
\end{align}
i.e.\ $G$ should be maximal around the surface
\[\Pi := \{\kk(u,v): (u,v) \in [0,3/10] \times [3/5,1]\}\]
in $[0,3/10] \times [0,7/10] \times [0,3/5] \times [0,2/5]$.
To make this more precise, we first claim the following.

\begin{proposition}
  The function $G$ as defined above is constant on $\Pi$.
\end{proposition}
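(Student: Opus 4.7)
My plan is to expand each binary entropy appearing in $g := (\log G)/m$ into single-variable pieces $\eta(x) := -x\log x$ and exhibit pairwise cancellations. Writing $H(x) = \log h(x) = \eta(x) + \eta(1-x)$, the two identities I would repeatedly invoke are
\[qH(p/q) = \eta(p) + \eta(q-p) - \eta(q) \qquad (0 \leq p \leq q)\]
and
\[\eta(\alpha t) + \eta((1-\alpha)t) = \eta(t) + t H(\alpha) \qquad (\alpha \in [0,1],\ t \geq 0),\]
both immediate from the definitions.

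Next I parametrize $\Pi$ by $(a,b,c,d) = (ut_1, (1-u)t_1, ut_2, (1-u)t_2)$, where $t_d := (v - n_{d1}/m)/(v-u)$ is the ``topic-$A$ proportion in document $d$'' corresponding to $(\phi_A, \phi_B) = (u,v)$. The defining equations $ut_d + v(1-t_d) = n_{d1}/m$ and $(1-u)t_d + (1-v)(1-t_d) = n_{d2}/m$ immediately yield the complementary identities
\begin{align*}
n_{11}/m - a &= v(1-t_1), & n_{12}/m - b &= (1-v)(1-t_1),\\
n_{21}/m - c &= v(1-t_2), & n_{22}/m - d &= (1-v)(1-t_2),
\end{align*}
and, by summing, $n_{.1}/m - (a+c) = v(2-S)$ and $n_{.2}/m - (b+d) = (1-v)(2-S)$, with $S := t_1 + t_2 = a+b+c+d$, $a+c = uS$, $b+d = (1-u)S$.

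I would then apply the first identity to each of the nine $H$-terms of $g$, substitute the above expressions, and apply the second identity to recombine. Three groups of cancellations occur: $SH(u)$ and $(2-S)H(v)$ produced by the four numerator-entropy expansions cancel the corresponding $-SH(u), -(2-S)H(v)$ coming from $-(9/10)H(10(a+c)/9)-(11/10)H(10(b+d)/11)$; the leftover $H(t_1)+H(t_2)$ in the same numerator group cancels $-H(a+b)-H(c+d)$; and the $\eta(S) + \eta(2-S)$ produced by those two middle denominator terms cancels what the leading $2H$-contribution (at argument $(a+b+c+d)/2 = S/2$) deposits via the first identity. All that survives is a linear combination of the constants $\eta(q)$ for $q \in \{3/10, 7/10, 3/5, 2/5, 9/10, 11/10, 2\}$, independent of $(u,v)$, so $G$ is constant on $\Pi$.

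The main obstacle is purely bookkeeping: tracking signs through six simultaneous cancellation pairs without losing any $\eta$ along the way. The structural reason the argument succeeds is that $(u, v, t_1(u,v), t_2(u,v))$ is the maximum-likelihood estimator of $(\phi_A, \phi_B, \theta_1, \theta_2)$ given $\K = \kk(u,v)$, and at this MLE the likelihood of $\W=\w$ depends only on the marginal frequencies $n_{d1}/m$, which are fixed by hypothesis; the extra degree of freedom in $(u,v)$ only reparametrizes the MLE manifold and cannot change the value of $g$.
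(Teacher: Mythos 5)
Your proof is correct, and it takes a genuinely different route from the paper's. The paper proves constancy by showing that all four partial derivatives of $g=\log G$ vanish on $\Pi$: it computes $g_a'$ explicitly from $\frac{d}{dx}\log h(x)=\log\frac{1-x}{x}$, substitutes the parametrization (\ref{k1k2})--(\ref{k3k4}), and checks ``after some algebra'' that the resulting logarithm is $0$, asserting the other three derivatives are analogous. You instead evaluate $g$ on $\Pi$ in closed form: writing $H=\log h$, $\eta(x)=-x\log x$, and using $qH(p/q)=\eta(p)+\eta(q-p)-\eta(q)$ together with $\eta(\alpha t)+\eta((1-\alpha)t)=\eta(t)+tH(\alpha)$, the parametrization $(a,b,c,d)=(ut_1,(1-u)t_1,ut_2,(1-u)t_2)$ with $t_d=(v-n_{d1}/m)/(v-u)$ turns every $(u,v)$-dependent $\eta$-term into one of three matched cancelling pairs (I checked the bookkeeping: the nine terms of (\ref{gexpr}) reduce to $-\eta(2)-\eta(\tfrac{3}{10})-\eta(\tfrac{7}{10})-\eta(\tfrac35)-\eta(\tfrac25)+\eta(\tfrac{9}{10})+\eta(\tfrac{11}{10})$). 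Your version buys several things the paper's does not: it produces the explicit value of the constant; it avoids the unverified ``some further algebra'' and the appeal to three unwritten analogous derivative computations; and, because the cancellation is organized document-by-document and word-by-word, it visibly generalizes to $D$ documents and $v$ words, which is relevant to the paper's final remarks. The paper's derivative computation, on the other hand, produces the gradient formula (\ref{gder}) that is reused later when analyzing the Hessian on $\Pi$, so the two proofs are not interchangeable in the larger argument. Two cosmetic points: your identity $qH(p/q)=\eta(p)+\eta(q-p)-\eta(q)$ needs $0\le p\le q$, which does hold throughout (e.g.\ $t_1,t_2\in[0,1]$ since $u\le 3/10\le 3/5\le v$), and your symbol $H$ collides with the paper's later use of $H(u,v)$ for the Hessian, so it should be renamed if incorporated. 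The closing MLE remark is a sound heuristic but is not needed for, and does not substitute for, the computation.
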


\bpf
The claim is equivalent to saying that all four partial derivatives of $g$ are zero at each $(a,b,c,d) \in \Pi$. It is helpful to spell out $g$:

\begin{align}
  g(a,b,c,d) &= 2\log h\left(\frac{a+b+c+d}{2}\right) \nonumber \\
   &+ \frac{3}{10} \log h\left(\frac{10}{3}a\right) + \frac{7}{10} \log h\left(\frac{10}{7}b\right) \nonumber \\
   &+ \frac{3}{5} \log h\left(\frac{5}{3}c\right) + \frac{5}{2} \log h\left(\frac{2}{5}d\right) \nonumber \\
  &- \frac{9}{10} \log h\left(\frac{10(a+c)}{9}\right) - \frac{11}{10} \log h\left(\frac{10(b+d)}{11}\right) \nonumber \\
  &- \log h(a+b) - \log h(c+d). \label{gexpr}
\end{align}

Hence, on observing that $\frac{d}{dx}h(x) = \log((1-x)/x)$,
\begin{equation} \label{gder}
  g_a'(a,b,c,d) = \log \frac{(1-\frac{a+b+c+d}{2})(1-\frac{10a}{3})(\frac{10(a+c)}{9})(a+b)}{(\frac{a+b+c+d}{2})(\frac{10a}{3})(1-\frac{10(a+c)}{9})(1-a-b)}
\end{equation}

For $(a,b,c,d) \in \Pi$, substitute with (\ref{k1k2}) and (\ref{k3k4}) and get, after some algebra
\[g_a'(a,b,c,d) = \log \frac{(\frac{9}{20}-u)(v-\frac{10}{3}uv)(\frac{20}{9}uv-u)(v-\frac{3}{10})}
{(v-\frac{9}{20})(\frac{10}{3}uv-u)(v-\frac{20}{9}uv)(\frac{3}{10}-u)}\]
which after some further algebra is seen to be $0$ for all $(u,v)$. Analogously, one finds that the three other partial derivatives vanish on $\Pi$. Since $g$ is clearly differentiable in the interior of its domain, this shows that $g$ is indeed constant on $\Pi$.
\epf

Next we study how $g$ behaves as one moves away from $\Pi$. In a vicinity of $\Pi$, this is up to small order terms captured by the nonzero eigenvalues of the Hessian of $g$ on $\Pi$.
Write
\[H(u,v) = g''(\kk(u,v)),\, 0<u<\frac{3}{10},\, \frac35 < v < 1\]
for the Hessian of $g$ at $\kk(u,v)$,
Since $g$ is constant on $\Pi$, it follows that two of the four eigenvalues of $H(u,v)$ are $0$. Write $\lambda_1(u,v)$ and $\lambda_2(u,v)$ for the other two eigenvalues. These are the two roots of a second degree polynomial with coefficients depending on $(u,v)$, i.e.
\begin{align}
  \lambda_1(u,v) & = \frac{\alpha_0(u,v)-\sqrt{\beta_0(u,v)}}{\delta(u,v)} \label{eig1} \\
  \lambda_2(u,v) & = \frac{\alpha_0(u,v)+\sqrt{\beta_0(u,v)}}{\delta(u,v)} \label{eig2},
\end{align}
where $\alpha_0$, $\beta_0$ and $\delta$ are polynomials in $(u,v)$. The precise expressions of these can be found and analyzed using your favorite mathematical software; we used a combination of Matlab and Maple. It turns out that
\[\delta(u,v) = u(\frac{3}{10}-u)(\frac{9}{20}-u)(v-\frac{9}{20})(v-\frac35)(1-v).\]
The expressions for $\alpha_0$ and $\beta_0$ are quite extensive and can be found in an appendix. As can be seen there, $\alpha_0^2$ and $\beta_0$ have a factor $(v-u)^4$ in common, and it will be more convenient to work with $\alpha(u,v)=-10^{-6}\alpha_0(u,v)/(v-u)^2$ and $\beta(u,v)=10^{-12}\beta_0(u,v)/(v-u)^4$, where the constants are chosen so that the leading terms of $\alpha$ and $\beta$ have coefficient $1$.

\begin{proposition} \label{negativeeigenvalues}
  The nonzero eigenvalues $\lambda_1(u,v)$ and $\lambda_2(u,v)$ of $H(u,v)$ are, as functions of $(u,v)$, continuous, negative and bounded away from $0$.
\end{proposition}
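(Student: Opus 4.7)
The plan is to establish the three claims---continuity, negativity, and uniform separation from zero---in that order, working entirely from the explicit formulas (\ref{eig1}) and (\ref{eig2}).

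For continuity, I would first verify that the denominator $\delta(u,v)$ is strictly positive on the open rectangle $(0,\tfrac{3}{10})\times(\tfrac{3}{5},1)$. This is immediate from the factored form of $\delta$: each of the six factors $u$, $\tfrac{3}{10}-u$, $\tfrac{9}{20}-u$, $v-\tfrac{9}{20}$, $v-\tfrac{3}{5}$, and $1-v$ is positive throughout that rectangle. Since $H(u,v)$ is the Hessian of a smooth real-valued function and hence a real symmetric matrix, its eigenvalues are real, which forces $\beta_0(u,v)\geq 0$ throughout. Continuity of $\lambda_1,\lambda_2$ on the open rectangle then follows since each is a quotient of continuous functions with nonvanishing denominator.

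For negativity, since $\delta>0$, both eigenvalues are strictly negative iff the trace $\lambda_1+\lambda_2=2\alpha_0/\delta<0$ and the product $\lambda_1\lambda_2=(\alpha_0^2-\beta_0)/\delta^2>0$, i.e.\ iff $\alpha_0<0$ and $\alpha_0^2>\beta_0$. Rewriting with the normalizations $\alpha=-10^{-6}\alpha_0/(v-u)^2$ and $\beta=10^{-12}\beta_0/(v-u)^4$, which are well-defined since $v-u\geq\tfrac{3}{5}-\tfrac{3}{10}>0$ on the closed rectangle, this reduces to the two polynomial inequalities $\alpha(u,v)>0$ and $\alpha(u,v)^2-\beta(u,v)>0$ on the open rectangle. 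Given the explicit expressions for $\alpha$ and $\beta$ from the appendix, I would verify both by symbolic computation, seeking factorizations into clearly-signed terms on the rectangle (analogous to the factorization of $\delta$). This algebraic verification is the main obstacle, since the polynomials are extensive and a clean positivity certificate will likely require guided manipulation in Maple rather than blind expansion.

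For the uniform separation from zero, observe that $\lambda_1,\lambda_2$ are already continuous and negative on the open rectangle, whose closure $[0,\tfrac{3}{10}]\times[\tfrac{3}{5},1]$ is compact, so it suffices to control the limiting behavior as $(u,v)$ approaches each of the four edges. On each such edge exactly one linear factor of $\delta$ vanishes to first order. Using the trace-product identities above, and verifying (again via the appendix expressions) that neither $\alpha_0$ nor $\alpha_0^2-\beta_0$ vanishes to the same total order as $\delta$ on any edge, I would conclude that at each boundary point the larger eigenvalue $\lambda_2$ satisfies $\limsup \lambda_2\leq -c_0<0$ for some universal $c_0>0$, while the smaller eigenvalue typically diverges to $-\infty$. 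Combined with continuity on the interior and compactness of the closed rectangle, this yields the desired uniform bound $\lambda_i(u,v)\leq -c<0$ for $i=1,2$ throughout the open rectangle.
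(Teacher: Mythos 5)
Your reduction is the same one the paper uses: with $\delta>0$ on the open rectangle, both eigenvalues are negative iff $\alpha_0<0$ and $\alpha_0^2>\beta_0$, equivalently $\alpha>0$ and $\alpha^2-\beta>0$ after the normalization by $(v-u)^2$. But the proposal stops exactly where the work begins. ``I would verify both by symbolic computation, seeking factorizations into clearly-signed terms'' is not a proof, and the paper's own argument shows why this cannot be waved through: $\alpha$ admits no such factorization, because it \emph{vanishes} at two corners of the rectangle (at $(u,v)=(0,1)$ and $(3/10,3/5)$), so its positivity has to be established by Taylor expansions at those corners with explicit bounds on the second partials, supplemented by a mesh evaluation with explicit first-derivative bounds on the rest of the domain. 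The quantity $\xi=\alpha^2-\beta$ does factor into signed terms, but only up to a residual polynomial $h(w,z)$ whose positivity requires a separate calculus argument (no real roots of $h_w'$ in the relevant range, monotonicity in $w$, positivity at $w=1$). None of this is present or even anticipated in your sketch, and it is the entire content of the proposition.

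The boundary analysis also contains a concrete error. You propose to verify ``that neither $\alpha_0$ nor $\alpha_0^2-\beta_0$ vanishes to the same total order as $\delta$ on any edge'' and to conclude $\limsup\lambda_2\le -c_0$. In fact, on each open edge $\alpha_0^2-\beta_0$ vanishes to \emph{exactly} the same (first) order as $\delta$ while $\alpha_0$ stays bounded away from zero; since $|\lambda_2|=(\alpha_0^2-\beta_0)/\bigl(\delta(|\alpha_0|+\sqrt{\beta_0})\bigr)$, this matching of orders is precisely what makes $\lambda_2$ tend to a finite negative limit. If your stated condition held, $\lambda_2$ would tend to $0$ or to $-\infty$, and in the former case the proposition would be false. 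Moreover, your claim that ``on each such edge exactly one linear factor of $\delta$ vanishes to first order'' fails at the four corners, where $\delta$ vanishes to second order and, at two of the corners, $\alpha_0$ vanishes as well; these require the separate two-parameter expansions that the paper carries out (the eight displayed corner asymptotics). The compactness framing is fine, but without the correct order-of-vanishing bookkeeping at edges and corners, the uniform bound does not follow.
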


\bpf
It is easy to convince oneself of the truth of Proposition \ref{negativeeigenvalues} using some mathematical software. Indeed it seems that $\max(\lambda_1(u,v),\lambda_2(u,v)) < -3$. However a proper proof requires more work. We start with some claims about the limit behavior when approaching the points at which $\delta(u,v)$ vanishes, which are by the expression for $\delta$ all points on the boundary of the domain. For the proof of the proposition, it suffices to prove that $\lambda_i(u,v)$ are negative and stay bounded away from $0$ as one approaches the boundary, Our claims go a bit further, since this comes with no extra cost in the proof. They are as follows. As $(u,v) \ra (0,v_0)$ for some $v_0 \in (3/5,1)$,
\[\lambda_1(u,v) = \left(1+o(u)\right)\frac{C(v_0)}{u} ,\,\, \lambda_2(u,v) \ra C(v_0)\]
where $C$ is a generic notation (i.e.\ not the same function in each instance) for a continuous negative function. Similarly, as $u \ra 3/10$ and $v \ra v_0 \in (3/5,1)$,
\[\lambda_1(u,v) = \left(1+o\left(\frac{3}{10}-u\right)\right)\frac{C(v_0)}{\frac{3}{10}-u} ,\,\, \lambda_2(u,v) \ra C(v_0).\]
Analogously
\[\lambda_1(u,v) = \left(1+o\left(v-\frac35\right)\right)\frac{C(u_0)}{v-\frac35} ,\,\, \lambda_2(u,v) \ra C(v_0)\]
and
\[\lambda_1(u,v) = (1+o(1-v))\frac{C(u_0)}{1-v} ,\,\, \lambda_2(u,v) \ra C(v_0)\]
as $v \ra 3/5$ and $v \ra 1$ respectively and $u \ra u_0 \in (0,3/10)$.

When approaching a corner of the domain, the following holds as $(\mu,\nu) \downarrow (0,0)$,
\begin{align*}
  \lambda_1\left(\mu,\frac{3}{5}+\nu\right) & = (1+o(\mu\nu))\frac{C}{\mu\nu}, \\
  \lambda_2\left(\mu,\frac35+\nu\right) & \ra C,  \\
  \lambda_1(\mu,1-\nu) & = (1+o(\mu\nu))\frac{C}{\max(\mu,\nu)}, \\
  \lambda_2(\mu,1-\nu) & = (1+o(\mu\nu)) \frac{C}{\min(\mu,\nu)},  \\
  \lambda_1\left(\frac{3}{10}-\mu,\frac35+\nu\right) & = (1+o(\mu\nu))\frac{C}{\max(\mu,\nu)}, \\
  \lambda_2\left(\frac{3}{10}-\mu,\frac35+\nu\right) & = (1+o(\mu\nu)) \frac{C}{\min(\mu,\nu)}, \\
  \lambda_1\left(\frac{3}{10}-\mu,1-\nu\right) & = (1+o(\mu\nu))\frac{C}{\mu\nu}, \\
  \lambda_2\left(\frac{3}{10}-\mu,1-\nu\right) & \ra C,
\end{align*}
where $C$ is here a generic notation for a negative constant.
To prove these claims,
note first that $\delta$ is nonnegative in its domain. Next, let us show how to prove that $\alpha_0 \leq 0$ with strict inequality in the interior of its domain, i.e.\ that $\alpha \geq 0$ with strict inequality in the interior.
Write $\gamma(w,z) = \alpha(3w/10,1-2z/5)$, $w,z \in [0,1]$. As $\gamma(0,0)$ and $\gamma(1,1)$ are both zero, we use a Taylor expansion around these points. Using e.g.\ Maple, it is seen that $\gamma'_w(0,0) = 9625/384$ and $\gamma'_z(0,0) = 1625/64$ so that
\[\gamma(w,z) = \frac{9624}{384}w + \frac{1625}{64}z + R\]
where the remainder term is of the form $R=\gamma''_{ww}(w_0,z_0)+2\gamma''_{wz}(w_0,z_0)+\gamma''_{zz}(w_0,z_0)$ for some point $(w_0,z_0)$ on the line between the origin and $(w,z)$. Summing up the absolute values of the terms in the partial second derivatives, we find that these are in absolute value all bounded by $163$ on $[0,1/20] \times [0,1/20]$. Being slightly generous with the linear term coefficients, it follows (since $w \geq w_0 \geq 0$ and $z \geq z_0 \geq 0$) that
\[\gamma(w,z) > 25(w+z)-163(w+z)^2\]
which is larger than $0$ on $[0,1/20]\times [0,1/20]$ except at the origin.
Analogously one shows that $\gamma(w,z)>0$ on $[19/20,1] \times [19/20,1]$ except at $(1,1)$.
Next, a plot in Maple with mesh size $2000 \times 2000$ reveals that on the points $(w,z)$ on the mesh and outside these two squares, $\gamma(w,z)>0.038$. An analogous analysis of the partial first derivatives of $\gamma$ shows that these are bounded by $48$. It follows that the chosen mesh size suffices to draw the desired conclusion.

To prove the negativity of the eigenvalues, it remains to show that $\beta(u,v) \leq \alpha(u,v)^2$ with strict inequality in the interior of the domain. Write $\xi(w,z)=\alpha(3w/10,1-2z/5)^2 - \beta(3w/10,1-2z/5)$, $w,z \in [0,1]$. Then your favorite software reveals that $\xi$ is quite amenable:
\begin{align*}
\xi(w,z) &= \frac{729}{125 \cdot 10^{13}}w(1-w)(3-2w)(2-w)(10-3w) \\
& \cdot z(1-z)(11-8z)(7-4z)(5-2z)h(w,z),
\end{align*}
where
\begin{align*}
h(w,z) &= 1744w^2z^2-4760w^2z-5280wz^2+3475w^2+13800wz+4400z^2 \\
&-9750w-11000z+8125.
\end{align*}
Differentiating with respect to $w$ gives
\[h'_w(w,z)=3488wz^2-9520wz-5280z^2+6950w+13800z-9750\]
and setting $h'_w(w,z)=0$ and solving for $w$ gives the solution
\[r(z)=\frac{2640z^2-6900z+4875}{1744z^2-4760z+3475}.\]
Writing $n_r(z)$ and $t_r(z)$ for the numerator and denominator respectively, we find that $n_r$, $t_r$ and $n_r-t_r$ all have no real roots and that $n_r(0)$, $t_r(0)$ and $n_r(0)-t_r(0)$ are all positive.
It follows that $h'_w(\cdot,z)$ has no zero in $[0,1]$ for an arbitrary fixed $z$. Since $h'_w(0,0)<0$ and $h'_w(0,z)$ has no real roots as a function of $z$, $h(\cdot,z)$ is decreasing. Since $h(1,z)=864z^2-1960z+1850$ has no real roots and $h(1,0)>0$, it finally follows that $h(\cdot,z)$ is strictly positive for all $z$, i.e.\ $h$ is strictly positive for all $w,z \in [0,1]$.
A quick consideration of the product of the other factors in $\phi(w,z)$ now shows that $\phi$ is $0$ on the boundary of $[0,1] \times [0,1]$ and strictly positive in the interior.

\smallskip

For the claimed limit behaviors, consider the case $(u,v) = (w,3/5+z)$ for small $w \in (0,3/10)$ and $z \in (0,2/5)$.
It turns out (this can be seen using the expressions for $\alpha(u,v)$ and $\beta(u,v)$ in the appendix) that
\[\beta\left(w,\frac35+z\right) + \alpha\left(w,\frac35+z\right) = C_1wz+O(w^3+z^3),\]
\[\beta\left(w,\frac35+z\right) - \alpha\left(w,\frac35+z\right) = C_2+O(w+z),\]
and $\alpha(w,\frac35+z) = C_3$ for negative constants $C_1,C_2,C_3$.
This proves the claims for $(u,v) \ra (0,v_0)$ and $(u,v) \ra (u_0,3/5)$ including when $u_0=0$ or $v_0=3/5$ (but neither $v_0=1$ nor $u_0 = 3/10$).

Consider now $(u,v) = (w,1-z)$. We have $\alpha(w,1-z)=C_1w+C_2z+O(w^2+z^2)$ for positive constants $C_1$ and $C_2$ and
\[\beta(w,1-z)-\alpha(w,1-z)^2 = C_3w^2+C_4z^2 + O(w^3+z^3)\]
and
\[\beta(w,1-z)+\alpha(w,1-z)^2 = C_5wz + O(w^3+z^3).\]
This proves the claims for when $(u,v) \ra (u_0,1)$ including $u_0=0$. The remaining claims are analogous.
\epf

As a consequence of these results, it follows that for any given unit vector $\xx$ orthogonal to $\Pi$ at some point $\kk(u,v) \in \Pi$ there is a $C(u,v,\xx) \in [\lambda_1(u,v),\lambda_2(u,v)]$ such that
\[g(\kk(u,v)+t\xx) = g(\kk(u,v))(1-C(u,v,\xx)t^2+O(t^3)),\]
where $C(u,v,\xx)$ is a continuous function of $(u,v,\xx)$ bounded away from $0$.
Hence for any $t=o(1)$,
\[G(\kk(u,v)+t\xx) = G(\kk(u,v))(1+O(t^3))e^{-C(u,v,\xx)t^2}.\]
We strongly believe that $G(\kk(u,v)+t\xx)$ is decreasing in $t$. However the function and its derivative with respect to $t$ seem to be intractable to analyze. Instead we observe that for all $t$, one has at least some $C(u,v,\xx)$ bounded away from $0$ such that $G(\kk(u,v)+t\xx) \leq G(\kk(u,v))e^{-C(u,v,\xx)t^2}$. To see this, observe that since the posterior distribution only depends on $\W$ via the number of word 1 tokens in each document. Denote these by $O_d$ and write also $A_d$ for the number that topic $A$ appears in document $d$. Then the posterior conditioned on $(\phi_A,\phi_B)$ can be computed as follows
\[\Pro(\K=m(a,b,c,d)|O_1=0.3m,O_2=0.6m,\phi_A,\phi_B)\]
\[ = \frac{\Pro(A_1=k_{1.},A_2=k_{2.})\Pro(\K=m(a,b,c,d)|A_1=k_{1.},A_2=k_{2.},\phi_A,\phi_B)}{\Pro(O_1=0.3m,O_2=0.6m,\phi_A,\phi_B)}.\]
Since the $A_d$'s are iid uniform and independent of the $\phi$'s, the first factor of the numerator equals $1/(m+1)^2$. The second factor equals the probability that four independent binomial random variables with parameters $(m(a+b),u)$, $m(1-a-b),v)$, $m(c+d),u)$ and $(m(1-c-d),v)$ respectively equal $ma$, $m(3/10-a)$, $mc$ and $m(3/5-d)$ respectively. Thus
$\Pro(\K=\kk(u,v)|\W=\w,\phi_A,\phi_B) = \Theta(n^{-2})$ for any $(\phi_A,\phi_B) \in [u \pm 1/\sqrt{m}]\times[v \pm 1/\sqrt{m}]$, whereas $\Pro(\K=\kk(u,v)+t\xx) \leq e^{-Cmt^2}$ for any $(u,v)$ and any $\phi_A$ and $\phi_B$. This follows from standard Chernoff bounds. The denominator is of order $1/m^2$ for $(\phi_A,\phi_B) \in [0,3/10] \times [3/5,1]$.

\medskip

Consider now the Markov chain $Z=\{Z_t\}$ on $S=[3m/10] \times [7m/10] \times [3m/5] \times [2m/5]$ defined as follows. Its stationary distribution is given by
\[\pi_Z(\kk) = C G(\kk)^m\]
for a normalizing constant $C$; this differs from $\pi_L$ only in that $\pi_Z$ neglects the low order factors in terms of the function $s$ which only make an essential difference close to $\Pi \cap \partial S$.
The updates are then made by proposing to change the present state $\kk$ to a state $\kk'$ which is chosen randomly among the eight states such that $\n \kk - \kk'\n_1 = 1$ and then making the change with probability $\pi_Z(\kk')/(\pi_Z(\kk')+\pi_Z(\kk))$.

\begin{proposition} \label{mixZ}
  There is a set $B$ of states such that $\pi_Z(B) \geq 1-1/m^{10}$ such that for all $\kappa = \Omega(1/m)$ and all $x \in B$,
  \[\taum^Z(x,\kappa) \leq C(\kappa)m^2.\]
\end{proposition}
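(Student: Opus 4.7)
I would apply Theorem~\ref{tMP} after establishing a conductance profile bound of the form $\Phi_Z(r)\geq c/(m\sqrt{r})$ in the relevant range, which integrates to $O(m^2/\kappa)$. The geometric picture, from Proposition~\ref{negativeeigenvalues} and the ensuing quadratic expansion of $g$, is that $\pi_Z$ concentrates in an $O(\sqrt{m\log m})$ tubular neighbourhood of the $2$-dimensional surface $m\Pi\subset S$ (where $m\Pi$ denotes $\Pi$ rescaled to the lattice), behaving like a uniform distribution on the $\Theta(m^2)$ lattice points of $m\Pi$ tensored with a Gaussian of lattice width $\Theta(\sqrt{m})$ in each of the two normal directions. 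Thus the effective mixing geometry of $Z$ is that of a random walk on a 2D box of linear size $\Theta(m)$, for which 2-dimensional isoperimetry gives exactly the desired profile bound and mixing in $O(m^2)$ steps.

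\textbf{Step 1 (good set).} Let $B=\{\kk\in S:\dist(\kk,m\Pi)\leq K\sqrt{m\log m}\}$ for a large constant $K$. The bound $\pi_Z(\kk)/\pi_Z^{\max}\leq e^{-c\,\dist(\kk,m\Pi)^2/m}$ follows from Proposition~\ref{negativeeigenvalues} and the quadratic expansion. Moreover $\pi_Z^{\max}=\Theta(m^{-3})$, obtained from the Gaussian integral in the two normal directions over the $\Theta(m^2)$-area surface $m\Pi$. A geometric sum over concentric lattice shells then yields $\pi_Z(B^c)\leq m^{-10}$ provided $K$ is large enough.

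\textbf{Step 2 (conductance profile).} For any $A\subseteq S$ with $r:=\pi_Z(A)\in[2m^{-10},1/2]$, essentially all the mass of $A$ lies in $B$. I would disintegrate $\pi_Z|_B$ along 2-dimensional tangent slices orthogonal to $m\Pi$; up to $O(m^{-1/2})$ corrections, the slice measure at base point $(u,v)\in m\Pi$ is a 2-dimensional lattice Gaussian of variance $\Theta(m)$ and total mass $\Theta(m^{-2})$. Let the silhouette $\hat A\subset m\Pi$ consist of those base points at which $A$ occupies at least half the slice mass; then $|\hat A|=\Theta(rm^2)$. The 2D discrete isoperimetric inequality gives $|\partial \hat A|\geq c\sqrt{|\hat A|}=c\,m\sqrt{r}$; each tangential boundary point contributes a stack of lattice edges to $Q(A,A^c)$ with total weight $\Theta(m^{-2})$, so $Q(A,A^c)\geq c\sqrt{r}/m$ and hence $\Phi_Z(r)\geq c/(m\sqrt{r})$. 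For $r<2m^{-10}$ the cruder bound $\Phi_Z(r)\geq c$ (standard, since small sets automatically have high conductance here: non-trivial sets are either local maxima near $m\Pi$, where neighbours are comparable, or far tails, where neighbours toward $m\Pi$ are exponentially heavier) contributes at most $O(m)$ to the integral in Theorem~\ref{tMP} since $\log(1/\pi_*)=O(m)$.

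\textbf{Conclusion and main obstacle.} Substituting into Theorem~\ref{tMP},
\[\taum^Z(\kappa,x)\leq 1+\int_{\pi_*}^{4/\kappa}\frac{4\,dr}{r\Phi(r)^2}\leq O(m)+\int_{2m^{-10}}^{4/\kappa}\frac{4m^2}{c^2}\,dr\leq C(\kappa)m^2.\]
The main technical obstacle is making Step~2 rigorous. $\pi_Z$ is not a clean tensor product of a tangential uniform measure and a Gaussian normal measure, because the Hessian eigendirections rotate continuously along $\Pi$, the underlying lattice is 4-dimensional rather than $2+2$, and the Hessian degenerates near the corners of $\Pi$ (as the corner asymptotics $\lambda_i\sim C/(\mu\nu)$ in the proof of Proposition~\ref{negativeeigenvalues} make precise). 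One must therefore carry out the disintegration locally, verify that the $O(m^{-1/2})$ corrections to the idealised product structure do not spoil the $\sqrt{r}$ scaling in the isoperimetric bound, and handle sets $A$ that concentrate near $\partial\Pi$ or straddle $\partial B$ separately, using the tail estimates of Step~1 to absorb their contributions.
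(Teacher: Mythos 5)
Your overall strategy is the same as the paper's: both proofs take $B$ to be (essentially) a super-level set of $\pi_Z$, i.e.\ an $O(\sqrt{m\log m})$-neighbourhood of $m\Pi$, establish a conductance profile of the form $\Phi_Z(r)\geq c/(mr^{\gamma})$ driven by the effective two-dimensional geometry of the tube around $\Pi$, and feed it into Theorem~\ref{tMP} (the paper claims $\gamma=1/3$, you claim $\gamma=1/2$; either integrates to $C(\kappa)m^2$). Two of your intermediate claims, however, fail as stated. First, the bound $\Phi_Z(r)\geq c$ for $r<2m^{-10}$ is false: at distance $d\gg\sqrt{m\log m}$ from $m\Pi$ the density $\pi_Z$ changes only by a factor $1+O(d/m)$ per lattice step, so a lattice ball of side $s\ll m/d$ sitting in the tail has nearly constant density, measure far below $m^{-10}$, and conductance of order $1/s=o(1)$. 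This is exactly the regime the paper's argument is designed to dodge: it states the profile bound only for the chain \emph{reflected} at $\partial B_{2J}$ (whose minimal stationary probability is polynomial in $1/m$, so the lower limit of the integral in Theorem~\ref{tMP} is harmless) and then uses the identity for the expected number of visits to a state between consecutive returns to argue that $Z$ started inside $B_J$ never reaches $\partial B_{2J}$ with probability $1-O(m^{-6})$. You should either adopt that device or redo the tail estimate; a more careful analysis gives $\Phi_Z(r)\geq c\sqrt{\log(1/(rm))/m}$ in the tiny-$r$ regime, which still contributes only $O(m\log m)$ to the integral, but it is not a one-line ``standard'' fact.

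Second, in Step~2 the claim $|\hat A|=\Theta(rm^2)$ is only an upper bound: a set occupying, say, the outer tail of every normal slice has $\hat A=\emptyset$ while $\pi_Z(A)$ can be of any prescribed order. You must split into the case where a constant fraction of $\pi_Z(A)$ lies in slices that $A$ fills to at least half (then apply 2D isoperimetry to $\hat A$, or to the silhouette of $A^c$ if $\hat A$ is nearly all of $m\Pi$) and the complementary case, where the isoperimetry of the discrete Gaussian within each slice gives slice conductance of order at least $1/\sqrt m\geq c/(m\sqrt r)$ for $r\geq 1/m$. To be fair, the paper's own justification of its profile bound (``comparing with simple random walk on $B_3$'') is no more detailed than yours, and you correctly identify the rotating Hessian eigendirections and the corner degeneracy as the remaining obstacles; but the two points above are concrete steps that would fail if executed literally as written.
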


From here on, we will use the notation $C$ for a generic positive constant.

\smallskip

\bpf
For $j=1,2,\ldots,2J:=2\lceil 10\log m \rceil$, let $B_j=\{\kk:\pi_Z(\kk) \geq e^{-j}\}$.
Observe that $\pi_Z(B_J)= 1 - n^{-10}$. We also have that
\begin{equation}\label{eq_conductance}
  \Phi_Z(r) \geq C\frac{1}{mr^{1/3}}
\end{equation}
for $r \leq 1/2$ and $\Phi(r) \geq C/m$ for $r > 1/2$ for $\{Z_t\}$ reflected at $\partial B_J$. This coincides with $Z$ itself as long as $Z$ does not visit a vertex neighboring $\partial B_{2L}$. However, starting $Z$ from any vertex $z_0$ within $B_J$, the expected number of visits to any vertex $x \in B_J^c$ is bounded by $m^{-6}$.
This follows from the well known fact of Markov theory that the expected number of visits to $x$ between two consecutive visits to $z_0$ is $\pi_Z(x)/\pi_Z(z_0)$. Hence, by Markov's inequality, with probability $1-m^{-6}$ no such $x$ will be visited.
From this and Theorem \ref{tMP}, it follows that started from $B_J$, $Z$ mixes in time $Cm^2$.

To see that (\ref{eq_conductance}) holds, note that $\pi_Z(B_3) > 19/20$. Since any set $A \subseteq B_J$ with $1/10 \leq \pi_Z(A) \leq 1/2$ must have $1/20 \leq \pi_Z(A \cap B_3) \leq 1/2$, it follows that there is a $C>0$ such that $\Phi_Z(r) \geq C/mr^{1/3}$ for $r \geq 1/10$; this follows on comparing with simple random walk on $B_3$. On the other hand, this is very easily seen to hold true also for $A \subseteq B_J$ with $\pi_Z(A) < 1/10$.
Hence $Z$ has a conductance profile of $C/mr^{1/3}$.
Consequently $Z$ reflected at $\partial B_{2J}$ has a mixing time of order $m^2$ and since $Z$ started within $B_J$ with probability $1-o(1)$ does not deviate from the reflected version within that time, this goes also for $Z$ itself started from within $B_J$.
\epf

\smallskip

Next, we would have liked to modify (\ref{eq_conductance}) to a lower bound $\Phi(r) \geq C\frac{1}{nr^{1/3}}$ also for $L$. However, we have not been able to do this formally. Instead we will use a canonical paths argument for the conductance of $Z$ restricted to $B_J$ that can be readily modified in the desired way.
First observe that the surface $\Pi$ can be alternatively expressed as the function surface $\{\kk(a,d)=(a,b(a,d),c(a,d),d):(a,d) \in [0,3/10] \times [0,2/5]\}$.
Indeed (using your favorite software)
\[b(a,d)=\frac{\left(-10a+3d+3+\sqrt{r(a,d)}\right)\left(4a+17d+3+\sqrt{r(a,d)}\right)}{-176a+162d-6+26\sqrt{r(a,d)}}\]
and
\[c(a,d)=\frac{\left(4a-9d+3-\sqrt{r(a,d)}\right)\left(16a+3d+3-\sqrt{r(a,d)}\right)}{-176a+162d-6+26\sqrt{r(a,d)}}\]
where $r(a,d)=16a^2+a(24-144d)+9(d+1)^2$. One can see that these are continuous with bounded partial derivatives.
By Proposition \ref{negativeeigenvalues}, $\partial B_j$ can be expressed as $h_l(a,d,\omega)$, $(a,d,\omega) \in [0,3/10] \times [0,2/5] \times [0,2\pi)$, where $h_j(a,b,c,d)$ is the distance from $\kk(a,d)$ to $\partial B_j$ in the $bc$-plane in the direction $\omega$ and
\[h_j(a,d,\omega) = \left(1+O\left(\frac{j^{3/2}}{\sqrt{m}}\right)\right)\sqrt{\frac{j}{m}}f(a,d,\omega)\]
for a continuous bounded, and bounded away from 0, function $f$ such that $f(a,d,\cdot)$ describes an ellipse in the $bc$-plane. In particular the intersection of $h_j$ with the $bc$-plane is convex for sufficiently large $m$.
Hence we can choose a canonical path in $B_j$ from $x_1=(a_1,b_1,c_1,d_1)m$ to $x_2 = (a_2,b_2,c_2,d_2)m$, $x_1,x_2 \in B_j$ in the following way. Fix $\alpha \in [0,1]$ and $\omega_2$ such that $x_2 = \alpha h_j(a_2,d_2,\omega_2)$ and write $(a_1,b_0,c_0,d_1)=\alpha h_j(a_1,d_1,\omega_2)$. Start by moving to a nearest neighbor $x$ of $x_1$ for which the $a$ and $d$ coordinates are odd and then
\begin{itemize}
  \item[(i)] walk from $x$ to $(a_1,b_0,c_1,d_1)$ via $\{(a_1,b,c_1,d_1):b \in [b_1,b_0]\}$,
  \item[(ii)] walk from $(a_1,b_0,c_1,d_1)$ to $(a_1,b_0,c_0,d_1)$ via $\{(a_1,b_0,c,d_1):c \in [c_1,c_0]\}$,
  \item[(iii)] walk from $(a_1,b_0,c_0,d_0)$ to $x_2$ via first walking
  \[\{(a,\alpha h_j(a,d_1,\omega_2)\cos \omega_2,\alpha h_j(a,d_1,\omega_2)\sin \omega_2,d_1):a \in [a_1,a_2]\}\]
  and then walking
  \[\{(a_2,\alpha h_j(a_2,d,\omega_2)\cos \omega_2,\alpha h_j(a_2,d,\omega_2)\sin \omega_2,d),d:d \in [d_1,d_2]\}.\]
\end{itemize}
Of course, by the discreteness of the underlying lattice, the paths of step (iii) are chosen as close as possible to the continuous paths indicated, with the extra condition that edges in the $b$- and $c$-directions are used only when their $a$- and $d$-coordinates are even.
This condition together with the starting step makes sure that no edge is ever used in more than one of (i), (ii) or (iii), with the exception that some $a$- and $d$-edges will also be used in some paths starting in their neighbors.

Then, as the volume of $B_j$ grows linearly in $j$, it is easy to see that there is a constant $C$ such that for each $e=\{x,y\} \in B_j \setminus B_{j-1}$,
\[Q(e) \geq \frac{1}{Ce^{j}m^3}.\]
Since no canonical path between a pair of vertices which are both in a given $B_j$ uses any edge in $B_j^c$, so that any canonical path using an edge $e \in B_j \setminus B_{j-1}$ has all vertices on at least one side of $e$ in $B_{j-1}^c$,
\begin{equation}\label{econg}
  \sum_{(x,y):e \in \gamma_{x,y}}\pi(x)\pi(y) \leq \frac{C}{e^{j}m^2}.
\end{equation}
To spell this out, assume for simplicity that $e \in B_1$. Write $\pi_{before}$ and $\pi_{after}$ for the total respective stationary probability masses of vertices that can appear before or after $e$ respectively in the above algorithm for finding canonical paths. Then we have, using a modification of a standard argument for random walk on $\Z_m^4$, that
\begin{itemize}
  \item if $e$ appears in (i), then $\pi_{before} \leq C/m^{5/2}$ and $\pi_{after} \leq 1$ and so the sum in (\ref{econg}), is bounded by $\pi_{before}\pi_{after} \leq C/m^{5/2}$.
  \item if $e$ appears in (ii), then $\pi_{before} \leq C/m^{2}$ and $\pi_{after} \leq C/\sqrt{m}$ and so the sum in (\ref{econg}) is bounded by $C/m^{5/2}$.
  \item if $e$ appears when adjusting $a$ in (iii), then $\pi_{before} \leq C/m$ and $\pi_{after} \leq C/m$ and so the sum in (\ref{econg}) is bounded by $C/m^2$.
  \item if $e$ appears when adjusting $b$ in (iii), then $\pi_{before} \leq 1$ and $\pi_{after} \leq C/m^2$ and so the sum in (\ref{econg}) is bounded by $C/m^2$.
\end{itemize}
Adjusting the argument for $e$ in a general $B_j$ is easy.
Summing up, we get $\rho(\Gamma) \leq C m$ and hence
\[\taum^{Z}(x,\kappa) \leq C m^2 \log(2m+\kappa^{-1})\]
for all $x \in B_L$ by Theorem \ref{tSJ}.

\smallskip

Next we observe that $Z$ and $L$ differ in two important ways. One of them is that their stationary distributions are different. The other is that $L$ spends a very long time in some states whereas $Z$ does not. Consider e.g.\ when $L_t = m(a,b,c,d)$, where $a=0$. Then by studying $\pi_R$, we see that a proposed change from an $A$ to a $B$ of a topic behind a word 1 in document 1 is accepted only with probability $1/(0.3m)$ and hence (since such a change is proposed with probability $0.3/2$) the time taken to change $a$ in the lumped chain will be on average $2m$. For $Z$ however, such a change will always take place with probability at least (say) $1/20$.
In general, the probability to make a change in the $a$-direction for $L$ is $C(a+1/m)(3/10-a+1/m)$ and analogously for the other directions, where $C$ is of constant order.

\smallskip

We will now adjust the arguments for $Z$ to $L$ in two steps, where first step takes care of the latter difference.
Consider any Markov chain $X=\{X_t\}$ on $B_J$ with the same stationary distribution as $Z$ for which there are constants $C_1,C_2$ such that $\Pro(X_1=m(a \pm 1/m,b,c,d)|X_0=m(a,b,c,d)) \in [C_1(a+1/m)(3/10-a+1/m),C_2(a+1/m)(3/10-a+1/m)]$ and analogously for $b$, $c$ and $d$. Use the same canonical paths as before.
Consider any edge $e=(m(a,b,c,d),m(a,b+1/m,c,d)) \in B_1$. Unlike for $\{Z_t\}$, $Q(e)$ is now $C(b+1/m)(7/10-b+1/m)/m^3$ rather than $C/m^3$.
However, by the nature of the canonical paths algorithm given, a closer look at (i) reveals that for a path from $x_1$ to $x_2$, $x_i=(a_i,b_i,c_i,d_i)$, to use $e$, either $b_1 \leq mb \leq b_2$ or vice versa. In the former case $e$ will be traversed from left to right (in the $b$ direction) and in the latter it will be traversed from right to left by the canonical path. For the union of all paths traversing $e$ from left to right $\pi_{before} \leq C(b+1/m)/m^{5/2}$ and $\pi_{after} \leq C(7/10-b+1/m)$. This compensates exactly for the extra factor $b$ in $Q(e)$ and hence the conclusion that $\rho(e) \leq C\sqrt{m}$ remains. The union paths traversing $b$ from right to left gets $\pi_{before} \leq C(7/10-b+1/m)/m^{5/2}$ and $\pi_{after} \leq C(b+1/m)$. Steps (ii) and (iii) are analogous as is the adjustment for general $B_j$. Hence
\[\taum^{X}(x,\kappa) \leq C m^2 \log(2m+\kappa^{-1})\]
for all $x \in B_J$.

\smallskip

Next consider $L$. Obviously $L$ will also with probability $1-o(1)$ not hit the boundary of $B_J$ in the time frame under consideration, so we may consider $L$ restricted to $B_J$.
Again use the same canonical paths as for $Z$. Let $X$ be defined as the lumped Markov chain of the Gibbs sampler $U=\{U_t\}$ on $\{A,B\}^{2m}$ driven by the stationary distribution
\[\pi_U(\z) = \frac{1}{\binom{\frac{3m}{10}}{am}\binom{\frac{7m}{10}}{bm}\binom{\frac{3m}{5}}{cm}\binom{\frac{2m}{5}}{dm}}G(a,b,c,d)^m\]
for $m(a,b,c,d)=\kk(\z)$. Then $\{X_t\}$ is a Markov chain of the kind just considered and it is such that the difference between $L$ and $X$ is that the weights corresponding to $L$ as compared to those of $X$ are altered so that the stationary probability $\pi_L(x)$ of vertex $x=(am,bm,cm,dm)$ is $Cs(a,b,c,d)\pi_Z(x)$, where
\[s(a,b,c,d) = \frac{s\left(\frac{10(a+c)}{9}\right)s\left(\frac{10(b+d)}{11}\right)s(a+b)s(c+d)}
{s\left(\frac{10a}{3}\right)s\left(\frac{10b}{7}\right)s\left(\frac{5c}{3}\right)s\left(\frac{5d}{2}\right)}\]
and $C$ is of constant order (note that $a+b+c+d$ is of constant order on $B_{2J}$.
Here, for $e = (m(a,b,c,d),m(a,b+1/m,c,d)) \in B_1$,
\[Q(e) = C(b+1/m)(7/10-b+1/m)\pi_L(x).\]
In (i), we get with $K$ being the diameter of the intersection of $B_1$ with the $bc$-plane, that for the paths traversing $b$ from left to right
\[\pi_{before} \leq C(b+1/m)\pi_L(x)\sum_{i=1}^{K}\frac{\sqrt{K}}{\sqrt{i}} = CK(b+1/m)\pi_L(x) \leq Cb\sqrt{m}\pi_L(x)\]
and still $\pi_{after} \leq C(7/10-b+1/m)$ and similarly for paths going from right to left. Hence $\rho(e) \leq \sqrt{m}$.
Again parts (ii) and (iii) are analogous as is the adjustment for general $B_j$ is.
Summing up, we have
\begin{proposition} \label{mixL}
  There is a set $B$ of states such that $\pi_Z(B) \geq 1-1/m^{10}$ such that for all $\kappa = \Omega(1/m)$ and all $x \in B$,
  \[\taum^L(x,\kappa) \leq C(\kappa)m^2\log m.\]
\end{proposition}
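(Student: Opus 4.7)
The plan is to combine the canonical-paths construction already carried out for $Z$ with the quantitative comparisons just established between $Z$, the auxiliary chain $X$, and $L$, and then invoke Theorem~\ref{tSJ}. Set $B := B_J$ with $J = 2\lceil 10\log m\rceil$. Since the factor $s(a,b,c,d)$ relating $\pi_L$ to $\pi_Z$ is bounded above and below by constants on $B_J$ (which sits near $\Pi$, away from the axes), we have $\pi_L(B) \geq 1 - Cm^{-10}$; and by the same Markov-inequality argument used in the proof of Proposition~\ref{mixZ}, $L$ started inside $B_J$ does not leave $B_J$ within time $C(\kappa)m^2\log m$ with probability $1-o(1)$. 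It therefore suffices to bound the mixing time of $L$ reflected on $\partial B_J$.

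Next I would reuse the canonical paths $\Gamma$ of steps (i)--(iii) and redo the congestion analysis for $L$. Passing from $Z$ to $L$ has two effects. The stationary weights change by a bounded factor, which is harmless. The transition rate along an edge $e=(m(a,b,c,d),m(a,b+1/m,c,d))$ gains a factor $(b+1/m)(7/10-b+1/m)$ relative to $Z$, so $Q_L(e)$ is correspondingly smaller; but, as spelled out in the paragraph immediately preceding the proposition, precisely the same factor reappears in the bound on $\pi_{\text{before}}$ for paths traversing $e$ in the direction in which $b$ was just increased (and symmetrically for the reverse direction), cancelling exactly. Carrying out this bookkeeping for each of the four coordinate directions and each of the three segments (i)--(iii), the dominant contribution remains that of segment (iii) in the $a$- or $d$-direction, yielding $\rho(e) \leq Cm$ on $B_1$. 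Extension to a general shell $B_j \setminus B_{j-1}$ proceeds exactly as in the analysis of $Z$: the volume of $B_j$ grows linearly in $j$ while $\pi_L$ decays like $e^{-j}$, so geometric summation preserves the bound. Hence $\rho(\Gamma) \leq Cm$ and Lemma~\ref{lpc} gives $\hat{\Phi}_L \geq C/m$ on the reflected chain.

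Since Gibbs samplers are reversible, Theorem~\ref{tSJ} then yields
\[\taum^L(\kappa,x) \leq \hat{\Phi}_L^{-2}\bigl(\log \pi_L(x)^{-1} + \log \kappa^{-1}\bigr) \leq Cm^2\bigl(J + \log \kappa^{-1}\bigr) \leq C(\kappa)m^2\log m\]
for every $x \in B_J$, because $\pi_L(x) \geq e^{-J-O(1)}$ on $B_J$ and $J = O(\log m)$. Combining with the non-exit estimate from the first paragraph delivers the conclusion.

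The main technical obstacle is the cancellation in the second paragraph: the rate factors $(a+1/m)(3/10-a+1/m)$, $(b+1/m)(7/10-b+1/m)$, etc., threaten to make $Q_L(e)$ much smaller than $Q_Z(e)$ near the boundary faces of the domain, and one must verify that in every direction and every segment of the canonical path, the offending factor appears monotonically in $\pi_{\text{before}}$ (or $\pi_{\text{after}}$) and so cancels. Once this direction-by-direction bookkeeping is organised and the linear-to-quadratic decomposition of the paths in (iii) is checked, extending from $B_1$ to a general shell $B_j$ is routine, and the final appeal to Lemma~\ref{lpc} and Theorem~\ref{tSJ} is immediate.
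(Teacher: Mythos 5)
Your outline matches the paper's strategy: keep the canonical paths built for $Z$, show that the slow-down factors $(b+1/m)(7/10-b+1/m)$ in $Q_L(e)$ are cancelled by the same factors appearing in $\pi_{\mathrm{before}}$ (split according to the direction in which $e$ is traversed), and then conclude via Lemma~\ref{lpc} and Theorem~\ref{tSJ} with $\log \pi_L(x)^{-1}=O(\log m)$ on $B_J$. That part of your bookkeeping is exactly the first of the paper's two adjustment steps.

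The gap is your premise that the factor $s(a,b,c,d)=\pi_L/(C\pi_Z)$ is bounded above and below by constants on $B_J$, ``which sits near $\Pi$, away from the axes.'' This is false: $\Pi$ itself meets the boundary of the box $[0,3/10]\times[0,7/10]\times[0,3/5]\times[0,2/5]$ (for instance at $u=0$ one has $a=c=0$, at $v=3/5$ one has $c=d=0$), so $B_J$ contains states where some coordinate is within $O(\sqrt{\log m/m})$ of $0$ or of its maximum. There a denominator factor such as $s(10a/3)=\sqrt{(10a/3+1/m)(1-10a/3+1/m)}$ is of order $m^{-1/2}$ while the corresponding numerator factors do not all degenerate at the same rate, so $\pi_L/\pi_Z$ varies by polynomial-in-$m$ factors across $B_J$. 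The paper explicitly flags this (``the low order factors \ldots only make an essential difference close to $\Pi\cap\partial S$'') and does \emph{not} treat it as harmless: it inserts an intermediate chain $X$ with stationary distribution $\pi_Z$ but $L$-like rates (your cancellation step applies verbatim there), and then passes from $X$ to $L$ by re-estimating $\pi_{\mathrm{before}}$ with an explicit sum $\sum_{i=1}^{K}\sqrt{K}/\sqrt{i}=O(K)$ over the varying $s$-weights along the path segment, which is what keeps $\rho(e)\le C\sqrt{m}$ on those segments and $\rho(\Gamma)\le Cm$ overall. Without some such argument your congestion bound could a priori lose an extra polynomial factor near $\Pi\cap\partial S$, so this step needs to be supplied rather than dismissed. (A smaller point: the proposition is stated for $\pi_Z(B)$, so your conversion to $\pi_L(B)$ is not needed, but the non-exit estimate for $L$ does require controlling $\pi_L$ on $B_J^c$, which again uses that the $s$-correction is only polynomial.)
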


\smallskip

Finally we take the step from the lumped Markov chain $L$ to the original Gibbs sampler $R$. Let $\kappa = \Omega(1/m)$, pick $C_0$ sufficiently large that $\n \Pro(L_t \in \cdot)-\pi_L \n_{TV} \leq \kappa\}$ whenever $t \geq t_0 := (C_0/2)m^2\log m$. Start a Gibbs sampler $R^0$ from $\pi_R$ and let $L^0$ be the corresponding lumped Markov chain. It is then possible to design a coupling of $L$ and $L^0$ such that $\Pro(L_{t_0}=L^0_{t_0}) \geq 1-\kappa/2$. Run this coupling up to time $t_0$. Then if $L$ and $L_0$ do not agree at that time, let the updates be dependent in an arbitrary way, e.g.\ simply independent until they meet.
Else, $L_{t_0}=L^0_{t_0}$, which means that the underlying Gibbs samplers $R$ and $R_0$ agree in the number of $A$'s in document $d$ at word $j$ for $d,j = 1,2$.

Now couple the updates from time $t_0$ and on in the following way. First pair up the positions in the corpus by pairing each position at which $R_t$ and $R^0_t$ agree with itself and pairing each token $(i,j)$ for which $R_t(i,j)=x$ and $R^0_t(i,j)\neq x$ with a token $(i^0,j^0)$ in the same document with the same word, for which $R^0_t(i^0,j^0) \neq x$ and $R_t(i^0,j^0)=x$, $x=A,B$. Clearly this can be done so that each position is paired with one and only one position. Pick such a pairing.

Next, to pick $R_{t+1}$ and $R^0_{t+1}$, pick for $R_0$ the position $(i^0,j^0)$ that is paired with the position $(i,j)$ chosen for $R$. The key observation now is that the conditional distribution of $R^0(i^0,j^0)$ given $R=(i',j')$, $(i',j') \neq (i^0,j^0)$ is the same as that for $R(i,j)$ given $R(i',j')$, $(i',j') \neq (i,j)$. Hence we may, and do, couple the updates so that $R$ and $R^0$ still agree at time $t+1$. This means that no new disagreements between $R$ and $R^0$ will ever agree and whenever a position $(i,j)$ at which $R$ and $R^0$ disagree is updated, there is a probability that they agree there, and then also at $(i^0,j^0)$, after the update. This probability might be quite small. However, there are two ways that a new agreement at $(i,j)$ can come about, either by picking $(i,j)$ or by picking $(i^0,j^0)$ to be updated in $R$, and at least one of these choices gives a probability of at least $1/2$ of agreement after the update (namely the choice that proposes to change a topic that is currently in minority). Hence, if there are $D_t$ positions of disagreement at time $t \geq t_0$, there is a probability of at least $((D_t/2)/(2m)) \cdot (1/2)$ of reducing the number of disagreements by two for one time step. Since there are at most $2m$ disagreements at time $t_0$, the coupling time $T$, satisfies that $T-t_0$ is stochastically dominated by the sum $\sum_{j=1}^{m}\xi_j$, where the $\xi_j$:s are independent and geometric with parameters $(m-2j)/(4m)$ respectively. Hence $\E[T-t_0] \leq (1+o(1))2m\log m$ and $\Var(T-t_0)<7m^2$ and so by Chebyshev's inequality $\Pro(T-t_0 > (C_0/2)m^2\log m) = O(1/(m^2\log^2m)) < \kappa/2$ for sufficiently large $m$.

Combining the two steps of the coupling argument, it follows that $\Pro(T>C_0m^2\log m) < \kappa$ for sufficiently large $m$. Now Theorem \ref{ta} follows from the coupling inequality.

\medskip

{\bf Final remarks.} Needless to say the situation considered here, with only two unique words in the corpus, is very unrealistic, as is the fact that we use only two documents. Using only two topics is of course also a bit unrealistic, but since the number of topics is typically very limited compared to the size of the corpus, this is not as serious.
However using more words and more documents does not seem to impose any fundamental differences in terms of the analysis here (even though it may become intractable) as long as each word appears a large number of times in each document. Indeed it is easy to generalize Proposition \ref{piR} to the case with $D$ documents and $v$ distinct words, using the general statement of Lemma \ref{lb}. One gets
\[\pi_R(\z) = C\frac{ \binom{n_{..}}{k_{..}} }{(k_{..}+1)(n_{..}-k_{..}+1)\,\prod_{d=1}^{D}\binom{n_{d.}}{k_{d.}}\,\prod_{j=1}^{v}\binom{n_{.j}}{k_{.j}}}\]
and for the corresponding lumped Markov chain
\[\pi_L(\kk) = C \frac{ \binom{n_{..}}{k_{..}} \prod_{d=1}^{D} \prod_{j=1}^{v} \binom{n_{dj}}{k_{dj}} }{(k_{..}+1)(n_{..}-k_{..}+1)\,\prod_{d=1}^{D}\binom{n_{d.}}{k_{d.}}\,\prod_{j=1}^{v}\binom{n_{.j}}{k_{.j}}}\]
and when all $n_{dj}$:s are larger than logarithmic in $Dm$ (having all documents of large equal length $m$), an analogous treatment seems to be in principle possible.
This condition on the $n_{dj}$'s is of course also not true in most practical situations. What happens in such situations is an important question worthy of further study.

Another assumption we made, was that the topic distribution per document prior $\alpha$ and the word distribution per topic prior $\beta$ were both $(1,1)$. It is common to use symmetric priors but usually with smaller parameters. In case of $\beta$ this does not seem to be likely to make much of a difference as long as we keep the number of words very low compared to the length of the documents.
For $\alpha$, the value used will typically depend on the context and may be optimized through cross validation. In any case, I conjecture (or rather speculate) that with $\alpha=(\ep,\ep)$ for a small constant $\ep$ independent of $m$, the mixing time for $\ep<1/2$ is
of order $m^{3-2\ep}\log m$.

\section{Appendix}

The functions $\alpha$ and $\beta$ are given by
\begin{multline*}
\alpha_0(u,v) = \frac{25}{2}(v-u)^2 \Big( 80000\,{u}^{4}{v}^{4}-152000\,{u}^{4}{v}^{3}-152000\,{u}^{3}{v}^{4}+
90000\,{u}^{4}{v}^{2} \\
+252400\,{u}^{3}{v}^{3}+90000\,{u}^{2}{v}^{4}-
18000\,{u}^{4}v-122220\,{u}^{3}{v}^{2}-122220\,{u}^{2}{v}^{3} \\
-18000\,u{v}^{4}+12420\,{u}^{3}v+27738\,{u}^{2}{v}^{2}+12420\,u{v}^{3}+3240\,{u
}^{3} \\
+16182\,{u}^{2}v +16182\,u{v}^{2}+3240\,{v}^{3}-6156\,{u}^{2} \\
- 15633\,vu-6156\,{v}^{2}+3645\,u+3645\,v-729 \Big).
\end{multline*}

\begin{multline*}
\beta_0(u,v) = \Big(\frac{25}{2}\Big)^2(v-u)^4 \Big( 6400000000\,{u}^{8}{v}^{8}-24320000000\,{u}^{8}{v}^{7} \\
-24320000000\,{u
}^{7}{v}^{8}+37504000000\,{u}^{8}{v}^{6}+93568000000\,{u}^{7}{v}^{7} \\
+37504000000\,{u}^{6}{v}^{8}-30240000000\,{u}^{8}{v}^{5}-146374400000\,
{u}^{7}{v}^{6}-146374400000\,{u}^{6}{v}^{7} \\
-30240000000\,{u}^{5}{v}^{8
}+13572000000\,{u}^{8}{v}^{4}+120018240000\,{u}^{7}{v}^{5}+
233005760000\,{u}^{6}{v}^{6} \\
+120018240000\,{u}^{5}{v}^{7} +13572000000\,{u}^{4}{v}^{8}-3240000000\,{u}^{8}{v}^{3}-54971856000\,{u}^{7}{v}^{4
} \\
-195046704000\,{u}^{6}{v}^{5}-195046704000\,{u}^{5}{v}^{6}-
54971856000\,{u}^{4}{v}^{7}-3240000000\,{u}^{3}{v}^{8} \\
+324000000\,{u}^
{8}{v}^{2}+13500432000\,{u}^{7}{v}^{3}+91833066000\,{u}^{6}{v}^{4}+
167842288800\,{u}^{5}{v}^{5} \\
+91833066000\,{u}^{4}{v}^{6}+13500432000\,
{u}^{3}{v}^{7}+324000000\,{u}^{2}{v}^{8}-1432080000\,{u}^{7}{v}^{2} \\
-23668200000\,{u}^{6}{v}^{3}-82693612080\,{u}^{5}{v}^{4}-82693612080\,{
u}^{4}{v}^{5}-23668200000\,{u}^{3}{v}^{6} \\
-1432080000\,{u}^{2}{v}^{7}+
11664000\,{u}^{7}v+2901646800\,{u}^{6}{v}^{2}+23482334160\,{u}^{5}{v}^
{3} \\
+44511382260\,{u}^{4}{v}^{4}+23482334160\,{u}^{3}{v}^{5}+2901646800
\,{u}^{2}{v}^{6}+11664000\,u{v}^{7} \\
-127720800\,{u}^{6}v-3799373040\,{u
}^{5}{v}^{2}-15209217720\,{u}^{4}{v}^{3}-15209217720\,{u}^{3}{v}^{4} \\
-3799373040\,{u}^{2}{v}^{5}-127720800\,u{v}^{6}+10497600\,{u}^{6}+
408414960\,{u}^{5}v \\
+3551084388\,{u}^{4}{v}^{2}+6965123256\,{u}^{3}{v}^
{3}+3551084388\,{u}^{2}{v}^{4}+408414960\,u{v}^{5} \\
+10497600\,{v}^{6}-
39890880\,{u}^{5}-606551328\,{u}^{4}v-2281385088\,{u}^{3}{v}^{2} \\
-2281385088\,{u}^{2}{v}^{3}-606551328\,u{v}^{4}-39890880\,{v}^{5}+
61515936\,{u}^{4} \\
+485146584\,{u}^{3}v+933840981\,{u}^{2}{v}^{2}+
485146584\,u{v}^{3}+61515936\,{v}^{4} \\
-49601160\,{u}^{3}-218074518\,{u}
^{2}v-218074518\,u{v}^{2}-49601160\,{v}^{3} \\
+22261473\,{u}^{2}+52435512
\,vu+22261473\,{v}^{2} \\
-5314410\,u-5314410\,v+531441 \Big)
\end{multline*}


\end{document}